\documentclass[10pt,review,conference]{IEEEtran}
\IEEEoverridecommandlockouts

\usepackage[framemethod=TikZ]{mdframed}
\mdfsetup{nobreak=false}

\usepackage{xspace}
\usepackage{colortbl}
\usepackage{amsmath,amsfonts,amssymb}
\usepackage{algorithm}
\usepackage{algpseudocode}
\usepackage{graphicx}
\usepackage{textcomp}
\usepackage{xcolor}
\usepackage{tikz}
\usepackage{svg}
\usepackage[most]{tcolorbox}
\usepackage{graphbox}
\usepackage{mathtools}
\usepackage{fancybox}
\usepackage{makecell}
\usepackage{multirow}
\usepackage{booktabs}
\usepackage{hyperref}
\hypersetup{hidelinks}
\usepackage{cleveref}
\usepackage[normalem]{ulem} \usepackage{listings}
\usepackage{caption}
\usepackage{subcaption}
\usepackage{longtable}
\usepackage{pifont}\usepackage{enumitem}
\usepackage{subfloat}
\usepackage[latin1]{inputenx}
\usepackage[T1]{fontenc}
\usepackage{soul}
\usepackage{courier}
\usepackage{svg}
\usepackage[switch]{lineno}

\newboolean{showcomments}
\setboolean{showcomments}{false}
\ifthenelse{\boolean{showcomments}}
{\newcommand{\nb}[2]{
  \fcolorbox{black}{yellow}{\bfseries\sffamily\scriptsize#1}
  {\sf\small$\blacktriangleright$\textit{#2}$\blacktriangleleft$}
 }
 
}
{\newcommand{\nb}[2]{}
 
}

\newcommand{\Mnist}{MNIST\xspace}
\newcommand{\Taxinet}{TaxiNet\xspace}
\newcommand{\Yolo}{YOLOv4\xspace}
\newcommand{\fga}{FGA\xspace}
\newcommand{\efga}{EFGA\xspace}
\newcommand{\Lymphoma}{LSC\xspace}

\newcounter{commentnumber}

\newcommand\referenceaddressed[1]{\ifthenelse{\equal{\ref{#1}}{\ref{#1end}}}{page~\pageref{#1} line~\ref{#1}}{page~\pageref{#1} lines~\ref{#1}---\ref{#1end}}}

\lstdefinestyle{mystyle}{
    commentstyle=\color{gray},
    keywordstyle=\bfseries,
    numberstyle=\tiny\color{darkgray},
    basicstyle=\linespread{1.1}\ttfamily\footnotesize,
    breakatwhitespace=false,
    breaklines=true,
    captionpos=b,
    keepspaces=true,
    numbers=left,
    numbersep=10pt,
    showspaces=false,
    showstringspaces=false,
    showtabs=false,
    tabsize=4,
    frame=single,
}

\usepackage[framemethod=TikZ]{mdframed}
\newenvironment{Answer}[1][]{\ifstrempty{#1}{\mdfsetup{frametitle={\tikz[baseline=(current bounding box.east),outer sep=0pt]
      \node[line width=0pt,anchor=east,rectangle,draw=white,fill=white]
    ;}}
  }{\mdfsetup{frametitle={\tikz[baseline=(current bounding box.east),outer sep=0pt]
      \node[anchor=east,rectangle,draw=white,fill=white]
    {\strut #1};}}}\mdfsetup{innertopmargin=-5pt,linecolor=black,linewidth=0.5pt,topline=true,frametitleaboveskip=\dimexpr-\ht\strutbox\relax,}
  \begin{mdframed}[]\relax }{\end{mdframed}}
\newcommand\phase[1]{\tikz[baseline=(X.base)]\node [draw=black,fill=white,thick,rectangle,inner sep=2pt, rounded corners=2pt](X){\color{black}\textbf{#1}};}

\usepackage[english]{babel} 	\usepackage{amsthm}		

\theoremstyle{plain}
\newtheorem{theorem}{Theorem}
\newtheorem*{remark}{Remark}

\theoremstyle{definition}

\makeatletter
\newcommand{\linebreakand}{\end{@IEEEauthorhalign}
  	\hfill\mbox{}\par
  	\mbox{}\hfill\begin{@IEEEauthorhalign}
}
\makeatother

\def\BibTeX{{\rm B\kern-.05em{\sc i\kern-.025em b}\kern-.08em
    T\kern-.1667em\lower.7ex\hbox{E}\kern-.125emX}}

\begin{document}
\title{Ensembles-based Feature Guided Analysis}

\author{\IEEEauthorblockN{\href{https://orcid.org/0000-0002-3033-7371}{Federico Formica}}
\IEEEauthorblockA{McMaster University, Canada\\
formicaf@mcmaster.ca}
\and
\IEEEauthorblockN{\href{https://orcid.org/0009-0008-8494-0329}{Stefano Gregis}}
\IEEEauthorblockA{University of Bergamo, Italy\\
s.gregis4@studenti.unibg.it}
\and
\IEEEauthorblockN{\href{https://orcid.org/0009-0008-1648-4130}{Andrea Rota}}
\IEEEauthorblockA{University of Bergamo, Italy\\
a.rota51@studenti.unibg.it}
\linebreakand
\IEEEauthorblockN{\href{https://orcid.org/0009-0008-0655-8335}{Aurora Francesca Zanenga},}
\IEEEauthorblockA{University of Bergamo, Italy\\
aurora.zanenga@unibg.it}
\and
\IEEEauthorblockN{\href{https://orcid.org/0000-0003-3161-2176}{Mark Lawford}}
\IEEEauthorblockA{McMaster University, Canada\\
lawford@mcmaster.ca}
\and
\IEEEauthorblockN{\href{https://orcid.org/0000-0001-5303-8481}{Claudio Menghi}}
\IEEEauthorblockA{University of Bergamo, Italy\\
McMaster University, Canada\\
claudio.menghi@unibg.it}
}

\maketitle  

\thispagestyle{plain}
\pagestyle{plain}

\begin{abstract}
Recent Deep Neural Networks (DNN) applications ask for techniques that can explain their behavior.
Existing solutions, such as Feature Guided Analysis (FGA), extract rules on their internal behaviors, e.g., by providing explanations related to neurons activation.
Results from the literature show that these rules have considerable precision (i.e., they correctly predict certain classes of features), but the recall (i.e., the number of situations these rule apply) is more limited.
To mitigate this problem, this paper presents Ensembles-based Feature Guided Analysis (EFGA). 
EFGA combines rules extracted by \fga into ensembles.
Ensembles aggregate different rules to increase their applicability depending on an aggregation criterion, a policy that dictates how to combine rules into ensembles.
Although our solution is extensible, and different aggregation criteria can be developed by users, in this work, we considered three different aggregation criteria.
We evaluated how the choice of the criterion influences the effectiveness of EFGA on two benchmarks (i.e., the \Mnist and \Lymphoma datasets), and found that different aggregation criteria offer alternative trade-offs between precision and recall.
We then compare \efga with \fga. 
For this experiment,  we selected an aggregation criterion that provides a reasonable trade-off between precision and recall.
Our results show that EFGA has higher train recall (+28.51\% on \Mnist, +33.15\% on \Lymphoma), and test recall (+25.76\% on \Mnist, +30.81\% on \Lymphoma) than FGA, with a negligible reduction on the test precision (-0.89\% on \Mnist, -0.69\% on \Lymphoma).

 \end{abstract}

\begin{IEEEkeywords}
    Ensemble, Rules, Features, Neural Networks, Software Engineering.
\end{IEEEkeywords}
\section{Introduction}
Artificial Intelligence techniques, such as Deep Neural Networks (DNN), are helping humans in several tasks and activities (e.g.,~\cite{khan2022software,boujida2024neural}). 
DNN is a specific artificial intelligence technique that, among the other tasks, can identify whether some input data belongs to a class or not.
For example, DNN can check for a pathology in a patient's radiography.

DNN behavior derives from data~\cite{1634649}.
Provided with a set of annotated images, a neural network learns how to classify the images.
For example, given a set of patients' radiographies labeled with the presence (or absence) of a pathology, the DNN learns how to classify new images.
This approach differs from the one used for traditional software, which requires engineers to define the software behavior explicitly.
This paradigm complicates the interpretation of the software behavior~\cite{baier2019challenges} since it is difficult to determine how the neural network decides whether a certain image belongs to a class or not.
These images constitute a high-dimensional input, which makes it even harder to understand how a DNN can discern high-level concepts, such as the presence or absence of a specific pathology. For example, the IDC (Invasive Ductal Carcinoma) dataset~\cite{janowczyk2016} uses $50\times 50$ images and each pixel can assume $266$ values for each baseline color (Red, Green, Blue). This leads settings lead to an input space of dimension $[0,255]^{50\times50\times3}$.
To mitigate this problem, the software engineering community is interested techniques that can motivate the actions selected by the neural network~\cite{molnar2022interpretable}.

Recent works (e.g.,~\cite{kim2018,yeh2020,kusters2020,ghorbani2019,koh2020,chen2020concept,barbiero2022entropy,Gopinath_2023}) tried to extracts rules related to the internals of the neural networks.
Feature-Guided Analysis (\fga)~\cite{Gopinath_2023} is one of these approaches.
It extracts rules describing how the behaviors of certain neurons of a DNN influence the detection the presence (or absence) of some human understandable concepts.
\fga rules consist of a precondition and a postcondition.
The precondition  is a condition (some of) the neurons of the neural network values, while
the postcondition asserts the feature presence (or absence).
The rules extracted by FGA entail that the postcondition holds when the condition specified by the precondition is satisfied.
For example, a rule can explain engineers that a specific pathology whenever some neurons are active.

\fga was originally evaluated on the \Taxinet~\cite{Beland_2020, Frew_2004},  \Yolo-Tiny~\cite{caesar2020nuscenes} benchmarks.
A recent study~\cite{FGA_Replication} replicated the experiments on \Mnist~ \cite{lecun1998} and \Lymphoma~\cite{janowczyk2016} benchmarks and assessed the capability of the extracted rules to verify the presence of certain visual features.
\fga was evaluated by considering the precision and recall of the extracted rules.
Precision concerns the ``correctness'' of the rules: If a rule has high precision, it is highly likely that, when an image satisfies the precondition, then the postcondition (feature presence or absence) is the correct one.
Recall concerns the ``applicability'' of the rule: The higher the recall, the more applicable it is. 
For example, a rule with high recall will detect a higher presence of inputs with the digit ''1'' than a rule with lower recall, which will detect the feature in less images. 
This rule can be extremely precise, but its applicability limited to the subset of images where the feature was identified.
For example, on a recent example (M-DNN1), \fga has shown remarkable results in terms of precision on the \Taxinet (min=0\%, max=100\%, avg=76.64\%, std=34.69\%), \Yolo-Tiny (min=69\%, max=91\%, avg=74.25\%, std=7.25\%), and \Mnist (min=98.66\%, max=100\%, avg=99.63\%, std=0.38\%) benchmarks.
However, the recall of the rules is more limited: The average recall for the \Taxinet, \Yolo-Tiny, and \Mnist benchmarks is (min=0\%, max=100\%,avg=61.88\%, std=37.11\%), (min=20\%, max=59\%, avg=32.13\%, std=11.96\%), and  (min=28.73\%, max=90.17\%, avg=60.3\%, std=20.16\%) respectively.
Although the \fga replication study demonstrated that \fga is capable of generating rules with high precision, a limited recall can be a limit in several applications. 
For example, a small recall provide an explanation for the presence of pathology that refers to a limited number of cases and does not provide the engineers with an exhaustive description for the detection of a given pathology.

To address this limitation, this paper presents Ensembles-based Feature Guided Analysis (EFGA). 
EFGA combines rules produced by \fga into ensembles, aggregations of rules that increase their applicability.
It is parametherized via an aggregation criteria, a policy that dictates how to combine rules into ensembles.
In this work, we propose three different aggregation criteria that respectively aggregate the best X rules by train recall (\texttt{TOP(X)}), iteratively increase the size of the ensemble until it reaches a threshold on the train recall for the training dataset (\texttt{REC(X)}), and aggregates all the rules with train recall above the average of the list of rules obtained for that feature (\texttt{AVG}).
We evaluated (RQ1) how the choice of the criterion used to build the ensemble influences the effectiveness of EFGA. 
Our results show that different solutions offer variable trade-offs between precision and recall, and that \texttt{TOP(10)} reaches a reasonable compromise between these two metrics.
We then compare EFGA with \texttt{TOP(10)} as aggregation criteria and compare it with \fga. 
Our results show that \efga has higher train recall (+33.15\% for L-DNN1), and test recall (+30.81\% for L-DNN1) with a negligible reduction on the  test precision (-0.69\%) compared to those obtained for \fga.

To summarize, our contributions are as follows:
\begin{itemize}
    \item A novel technique (i.e., Ensembles-based Feature Guided Analysis) that extends \fga to increase its recall;
    \item An empirical analysis on how the selection of the aggregation criteria (and its parameters) influence  the effectiveness of the solution; 
    \item A rigorous comparison of EFGA and  \fga;
    \item A systematic discussion on our results and their threats to validity;
    \item A complete replication package containing the dataset, implementation of EFGA, replication scripts, and results.
\end{itemize}

This paper is organized as follows.
\Cref{sec:background} summarizes Feature-Guided Analysis.
\Cref{sec:efga_algo} presents Ensembles-based Feature Guided Analysis (EFGA).
\Cref{sec:implementation}  describes implementation details.
\Cref{sec:eval} evaluates our contribution.
\Cref{sec:discussion} discusses our results and threats to validity.
\Cref{sec:related} summarizes related work.
\Cref{sec:conclusion} concludes. 
 \section{Background}\label{sec:background}
\Cref{sec:benchmark} briefly introduces \Mnist and \Lymphoma benchmarks.
\Cref{sec:fga} summarizes Feature Guided Analysis.
\Cref{sec:precision} defines their precision and recall.

\subsection{Benchmark}
\label{sec:benchmark}
Our benchmark consists of the MNIST (Modified National Institute of Standards and Technology database)~\cite{lecun1998} and \Lymphoma (Lymphoma Subtype Classification)~\cite{janowczyk2016} datasets.

\Mnist is a dataset of handwritten gray-scale images representing digits. 
Each image is associated with a label indicating the corresponding digit. The dataset contains 70'000 images: 60'000 images representing the training dataset, and 10'000 images representing the test dataset.

\Lymphoma is a dataset from the National Institute on Aging. 
It contains a collection of histopathological images for the classification of three lymphoma types: Chronic Lymphocytic Leukemia (CLL), Follicular Lymphoma (FL), and Mantle Cell Lymphoma (MCL).
The dataset contains 374 images (113 CLL, 139 FL, and 122 MCL).
Following the approach of the original paper, each image ($1388\times1040$ px) was cropped into 1376 overlapping patches of $36\times36$ px with a stride of 32, yielding a total of 514'624 patches.
The original work proposed a winner-take-all decision logic where the DNN returned the classification for each patch, and the most frequent class became the classification of the entire image.
The original dataset provides the ground-truth classification for the original 374 images, and not for the individual patches, since some patches may not contain evidence of any lymphoma type.
To solve this problem, we decided to filter the dataset of 514'624 patches and considered only the ones for which the network proposed in~\cite{janowczyk2016} returned a classification score above $95\%$.
Since this network achieves a high classification accuracy (96.58\% $\pm$ 0.01\%~\cite{janowczyk2016}), this ensures that only the patches that can be confidently classified in one of the three classes are considered.
This filtering process reduced the dataset to 442'398 patches, out of which 135'574 were classified as CLL, 169'367 as FL, and 137'457 as MCL.

\subsection{Feature Guided Analysis}\label{sec:fga}
\fga~\cite{Gopinath_2023} extracts a representation of a visual feature from a feedforward neural network. This type of DNN is organized in layers, each one with neurons producing an output based on the output of the neurons at the previous layer.
\Cref{fig:approachFGA} shows the \fga approach.  
\fga consists of the \emph{Extract Neurons Activations} (\phase{1}) and \emph{Compute Decision Tree} (\phase{2}) phases.

 \emph{Extract Neurons Activations} (\phase{1}). 
    It computes a dataset (\texttt{D}$^\prime$) from the DNN model (\texttt{M}), the layers (\texttt{L}) to be considered by \fga, a dataset (\texttt{D}) of images, and a set of features of interest (\texttt{F}), 
The dataset (\texttt{D}$^\prime$) contains the activation values of all the neurons from the layers (\texttt{L}) of the model (\texttt{M}) for each input image in the dataset (\texttt{D}) and a label (for every feature) indicating whether the feature from \texttt{F} is present or absent in the input images. \Cref{tab:dataset_example} shows an example of dataset \texttt{D}$^\prime$ structured like the ones of FGA. Activation contains the activation values for each Neuron ($\text{N}_{1,1}$...$\text{N}_{1,n}$), while the columns Digit 1 and Line show the presence or absence of two example features that we defined, where Line aggregates the digits 1,4 and 7 while Digit 1 is the feature describing the presence of the digit 1.

\begin{table}[t]
    \centering
    \caption{Example of dataset \texttt{D}$^\prime$ structured like the ones of FGA. }
    \begin{tabular}{l l l }
        \toprule
        \textbf{Activation} & \textbf{Digit 1}    &\textbf{Line}\\
        \midrule
         $[0.34,-1.23,0.17,...]$ & 1 & 0\\
         $[0.42,-0.87,0.55,...]$ & 0 & 1\\
$\vdots$ & $\vdots\,$ & $\vdots\,$ \\
         $[3.12,3.11,1.31,...]$ & 1 & 0\\
        \bottomrule
    \end{tabular}
    \label{tab:dataset_example}
\end{table}

\begin{figure}[t]
    \centering
       \tikzstyle{output} = [coordinate]

\begin{tikzpicture}[auto,
 block/.style ={rectangle, draw=black, thick, fill=white!20, text width=8em,align=center, rounded corners},
 block1/.style ={rectangle, draw=blue, thick, fill=blue!20, text width=5em,align=center, rounded corners, minimum height=2em},
 line/.style ={draw, thick, -latex',shorten >=2pt,scale=0.65},
 cloud/.style ={draw=red, thick, ellipse,fill=red!20,minimum height=1em,scale=0.65},scale=0.65]

\node [output] (INITNode) {};
\draw (0,0) node[block,right of=INITNode, node distance=0cm] (B1) {\phase{1}  Extract Neurons Activations};
\node [output, left of=B1, node distance=3.2cm] (NN) {};
\node [block, right of=B1, node distance=3.6cm] (B2) {\phase{2} Compute Decision Tree};
\node [output, right of=B2, node distance=2cm] (End) {};

\draw[-stealth] (NN.south) -- (B1)
    node[midway]{\texttt{M}, \texttt{L}, \texttt{D}, \texttt{F}};
\draw[-stealth] (B1) -- (B2)
    node[midway,above,align=right]{ \texttt{D'}};
\draw[-stealth] (B2) -- (End)
    node[midway,above]{\texttt{R}};
    
 \end{tikzpicture}     \caption{Overview of Feature Guided Analysis.}
    \label{fig:approachFGA}
\end{figure}
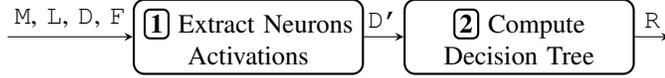

 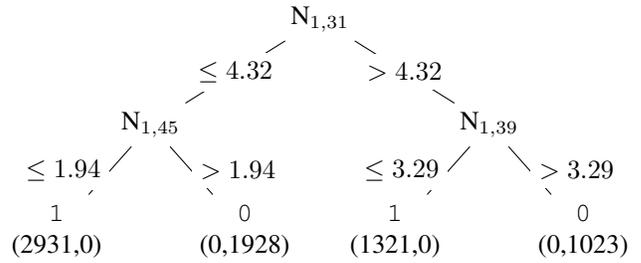
\begin{figure}[t]
    \centering
       \begin{tikzpicture}[level 1/.style={sibling distance=45mm},level 2/.style={sibling distance=25mm},level 3/.style={sibling distance=10mm}, level distance=40pt
]
\node[]{$\text{N}_{1,31}$}
child {
    node[] {$\text{N}_{1,45}$}
        child{
            node[align=center]{\texttt{1} \\ (2931,0)} edge from parent node[fill=white,left] {$\leq 1.94$}
        }
        child{
            node[align=center]{\texttt{0} \\ (0,1928)} edge from parent node[fill=white,right] {$> 1.94$}
        }
        edge from parent node[fill=white] {$\leq 4.32$}
}
child { 
         node[align=center]{$\text{N}_{1,39}$}
         child{
                node[align = center]{\texttt{1} \\ (1321,0)} edge from parent node[fill=white,left] {$\leq 3.29$}
            }
          child{
            node[align=center]{\texttt{0} \\ (0,1023)} edge from parent node[fill=white,right] {$> 3.29$}
          }
            edge from parent node[fill=white] {$> 4.32$}
    };
\end{tikzpicture}
      \caption{Example of decision tree structured like the ones from \fga and \efga.}
    \label{fig:tree}
\end{figure}

\lstset{numbers=left,xleftmargin=5em,frame=single,framexleftmargin=3em,tabsize=1}
\begin{figure}[t]
\centering
\begin{lstlisting}[caption = {Pseudocode for Feature Guided Analysis.}, label = {alg:pseudocode_fga}]
function FGA(M,D,L,F){*\label{line:main_fga}*
	labels = assignFeature(M,D,F)*\label{line:assignfeature}*
	activations = M.activations(L,D)*\label{line:get_act}*
	for feature in F and layer in L:{ *\label{line:for_loop}*
		activation = activations.get(layer) *\label{line:get_act_layer}*
		$\text{D'}_{tr}$ = [activation.tr,labels.tr]*\label{line:data_train}*
		tree = DecisionTree($\text{D'}_{tr}$)*\label{line:tree}*
		paths = extractPaths(tree) *\label{line:paths}*
		$\text{D'}_{te}$ = [activation.test,labels.test]*\label{line:data_test}*
		R.add(evPt(paths,layer,feature,$\text{D'}_{tr}$,$$$\text{D'}_{te}$))*\label{line:list}*
	}
	return R *\label{line:return_fga}*
}
\end{lstlisting}
\end{figure}

\emph{Compute Decision Tree} (\phase{2}): For every feature in \texttt{F}, the dataset (\texttt{D}$^\prime$) is used to compute a decision tree.
For example, the decision tree from \Cref{fig:tree} shows an example of decision tree structured like the ones built by \fga.
The decision tree has three layers, one root node and four leaf nodes. 
The decision tree define conditions on the values assumed by the neurons of the DNN model that entail the presence or absence of the feature.
Each node of the tree refers to a neuron, i.e., each node is represented as $\text{N}_{x,y}$ where $x$ is the layer and $y$ is the neuron number.
Edges are labeled with conditions related to the activation value of the neuron.
For example, the edge from $\text{N}_{1,31}$ to $\text{N}_{1,45}$ considers the activations of the neuron $\text{N}_{1,31}$ lower or equal than 4.32. 
Leaf nodes are associated to the label describing the presence~(1) of the feature in the input images, while the others to the one describing feature absence~(0).
Specifically, each leaf node is associated to a tuple of values $(a,b)$ where $a$ is the number of dataset inputs with label describing the feature presence and $b$ is the number of inputs describing feature absence. 
Pure leaf nodes (e.g. leaves with all inputs belonging to the same label) indicate cluster of images that (do not) possess a certain feature.
All the spurious nodes in the tree are discarded, so we consider only pure leaf nodes.
A path from the root node of the tree to a pure leaf node defines an assertion made by the conjunction of all the conditions of the edges related to the path.
This condition entails the presence (or absence) of the feature.
For example, if the activation of $\text{N}_{1,31}$ is lower or equal than 4.32 and the one of $\text{N}_{1,45}$ is lower or equal than 1.94, the decision tree predicts the presence of the feature in the input image.
This assertion and the information related to the presence (or absence) of the feature are combined in a decision rule of the form \textbf{pre}~$\rightarrow$~\textbf{post}, where \textbf{pre} is the assertion on the values of some neurons and post indicates  the presence (or absence) of the feature. 
Therefore, \fga will extract from this decision tree the following rules with postcondition describing the feature presence: 
$(\text{N}_{1,31} \leq 4.32 \land \text{N}_{1,45} \leq 1.94) \rightarrow \texttt{1}$ and
$(\text{N}_{1,31} > 4.32 \land \text{N}_{1,39} \leq 3.29) \rightarrow \texttt{1}$. 
Note that for every feature, a separate Decision Tree is computed. 
This choice enables to assign to multiple features to every input. 
For example, if one feature describes the presence of the digit ``1'' and a second feature describes the presence of a straight line in the digits ``1'', ``4'', ``7'', then an image of the digit ``1'' has both the first and the second feature.

FGA returns a set ($R$) of decision rules as output.
We define the length of a rule as the number of clauses in its precondition. 
For our example, the rules have length two since they have two clauses joined in a conjunction in their precondition.

\Cref{alg:pseudocode_fga} reports the pseudo code for the algorithm of \fga.
\Cref{line:main_fga} shows the body of the \fga function.
\Cref{line:assignfeature} assigns the labels to the input images in \texttt{D} for every feature in \texttt{F}. \Cref{line:get_act} extracts the activation values from the DNN.  \Cref{line:for_loop} loops on every feature and  every layer to extract the activation values at a layer (\Cref{line:get_act_layer}), and builds the training dataset (\Cref{line:data_train}). \Cref{line:tree} estimates the decision tree on the training dataset, and \Cref{line:paths} extracts the paths from the root node to the pure leaf nodes of the tree. Note that in our experiments, the decision trees had only pure leaf nodes, so all the input images were considered when defining the rules. This may not be true for all DNNs, and some of the input images may not be considered to formulate the rules since the spurious nodes are discarded by our implementation. Then, the test dataset is computed (\Cref{line:data_test}) and the extracted paths are evaluated on both training and test datasets.
The function returns the list of all rules, at all layers, for all the features considered. 
In this work, for consistency with~\cite{Gopinath_2023}, we updated the implementation from~\cite{FGA_Replication} to consider misclassified inputs.
In their replication study~\cite{FGA_Replication}, the authors did not consider misclassified inputs and extracted the rules from correctly classified images. Considering the high accuracy of the neuronal networks used, the fraction of misclassified inputs is small compared to the size of the \Mnist dataset. 

\subsection{Precision and Recall}

\label{sec:precision}
We selected precision and recall as metrics to evaluate the performances of the rules, as done in \cite{Gopinath_2023}. Intuitively
precision is defined as $\frac{\text{TP}}{\text{TP} + \text{FP}}$, while recall as $\frac{\text{TP}}{\text{TP} + \text{FN}}$.
Precision measures of how confident we are that the rule is correct about the presence or absence of a feature in the input image, while recall describes the percentage of inputs with (or without) the feature captured by the rule considered.
We define a True Positive input (TP) as an input that has the feature that satisfies the rule, while we define as False Positive (FP) an input that does not have the feature but satisfies the rule.
We defined a False Negative input (FN) as an input that, despite showing the visual feature, does not satisfy the rule. 
Let us consider again the rules for the postcondition ''1'' extracted from the decision tree in \Cref{fig:tree}. The test dataset from which the tree is extracted has 7203 inputs, 4000 with the feature and 3203 without the feature. If the rule $(\text{N}_{1,31} \leq 4.32 \land \text{N}_{1,45} \leq 1.94) \rightarrow \texttt{1}$ has 2900 TP inputs and 31 FP, it will have a precision of 98.94\%. 
If the rule $(\text{N}_{1,31} > 4.32 \land \text{N}_{1,39} \leq 3.29) \rightarrow \texttt{1}$ has 1100 TP inputs and 221 FP, its precision will be 83.27\%.
In addition, by considering 2900 TP and 1100 FN for the first rule, 1100 TP and 2900 FN for the second rule, the recall is respectively 72.5\% and 27.5\%. \section{Ensembles-based Feature Guided Analysis}
\label{sec:efga_algo}

We extended the \fga approach (\Cref{sec:background}) to produce ensembles of rules.
\Cref{fig:approach} provides an overview of the phases of the algorithm computing the ensembles of rules. 
The first two phases (\phase{1} and \phase{2}) correspond to the standard FGA approach (see \Cref{sec:background}).
The last phase (\phase{3}) creates the ensemble of rules by considering a criterion ($C$) specified as input and the decision rules ($R$). The criterion defines which rules will be considered for the ensemble. 
Unlike \fga, which returns a set of rules in the form \textbf{pre}~$\rightarrow$~\textbf{post},  
\efga a set of ensembles (of rules).
Each ensemble has the form  $(\textbf{pre}_1\rightarrow$~\textbf{post})$\vee(\textbf{pre}_2\rightarrow$~\textbf{post} $)\vee \dots \vee (\textbf{pre}_n\rightarrow$~\textbf{post}$)$.\footnote{Or equivalently $(\textbf{pre}_1 \vee \textbf{pre}_2 \vee \dots \vee \textbf{pre}_n)\rightarrow$~\textbf{post}.} 
Intuitively, the postcondition (\textbf{post}) holds whenever one of the preconditions ($\textbf{pre}_1$, $\textbf{pre}_2$, \ldots, $\textbf{pre}_n$) is satisfied.

To compute the ensemble, EFGA creates the disjunction of several rules of the form ($\textbf{pre}\rightarrow\textbf{post}$) extracted by FGA.
These rules are extracted based on the aggregation criterion ($C$).
Consider our example from \Cref{fig:tree}. 
EFGA aggregates the rules 
 $(\text{N}_{1,31} \leq 4.32 \land \text{N}_{1,45} \leq 1.94) \rightarrow \texttt{1}$ and
 $(\text{N}_{1,31} > 4.32 \land \text{N}_{1,39} \leq 3.29) \rightarrow \texttt{1}$ 
 into the  ensemble:
  $(\text{N}_{1,31} \leq 4.32 \land \text{N}_{1,45} \leq 1.94) \rightarrow \texttt{1} \vee (\text{N}_{1,31} > 4.32 \land \text{N}_{1,39} \leq 3.29) \rightarrow \texttt{1}$ 
Therefore, according to this defition of ensemble, we can update the definitions of TP, FP and FN.
We define a TP input as an input that has the feature that satisfies one of the rules of the ensemble, while we define as FP an input that does not have the feature but satisfies one of the rules of the ensemble.
Lastly, we defined a FN input as an input that, despite showing the visual feature, does not satisfy any of the rules in the ensemble. 
 Therefore, the ensemble categorizes the presence or absence of a feature based on the satisfaction of the preconditions of one of the individual rules. 
 Since these rules are extracted from a decision tree structure, each input can satisfy only one rule, since it can satisfy only one precondition. 
 This property ensures that, when aggregating multiple rules into an ensemble, the number of inputs satisfying the ensemble of rules always increases. These considerations support the following theorem for the Training and Test metrics of \efga.

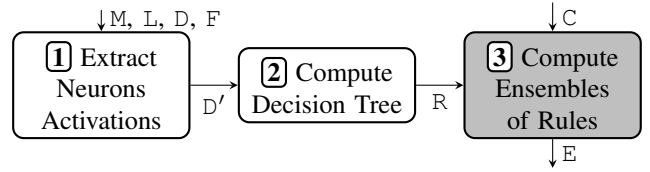
\begin{figure}
    \centering
     \tikzstyle{output} = [coordinate]

\begin{tikzpicture}[auto,
 block/.style ={rectangle, draw=black, thick, fill=white!20, text width=6em,align=center, rounded corners},
 block1/.style ={rectangle, draw=blue, thick, fill=blue!20, text width=5em,align=center, rounded corners, minimum height=2em},
 line/.style ={draw, thick, -latex',shorten >=2pt,scale=0.65},
 cloud/.style ={draw=red, thick, ellipse,fill=red!20,minimum height=1em,scale=0.65},scale=0.65]

\node [output] (INITNode) {};
\draw (0,0) node[block,right of=INITNode, node distance=0cm] (B1) {\phase{1}  Extract Neurons Activations};
\node [output, above of=B1, node distance=1cm] (NN) {};
\node [block, right of=B1, node distance=3cm] (B2) {\phase{2} Compute Decision Tree};
\node [block, right of=B2, node distance=3cm, fill = lightgray] (B3) {\phase{3} Compute Ensembles of Rules};
\node [output, below of=B3, node distance=1.1cm] (End) {};
\node [output, above of=B3, node distance=1.1cm] (CRIT) {};

\draw[-stealth] (NN.south) -- (B1.north)
    node[midway]{\texttt{M}, \texttt{L}, \texttt{D}, \texttt{F}};
\draw[-stealth] (B1.east) -- (B2.west)
    node[midway,below]{\texttt{\texttt{D}$^\prime$}};
\draw[-stealth] (B2.east) -- (B3.west)
    node[midway,below,]{\texttt{R}};
\draw[-stealth] (B3.south) -- (End.north)
    node[midway]{\texttt{E}};
\draw[-stealth] (CRIT.south) -- (B3.north)
node[midway]{\texttt{C}};
 \end{tikzpicture}     \caption{Ensembles-based Feature Guided Analysis.}
    \label{fig:approach}
\end{figure}

\begin{theorem}
\label{theor:recallComp}
Let $R$ be a list of $n$ rules of the form \textbf{pre}$_i$~$\rightarrow$~\textbf{post}$_i$,  $\mathit{Re}(R_{i})$ be the recall of the rule $R_{i}$, 
and $E$ be the ensemble defined as $E\coloneqq \bigvee_{i=1}^{n}$\textbf{pre}$_i$~$\rightarrow$~\textbf{post}$_i$.
The recall $\mathit{Re}(E)$ of $E$ equals  the sum of the recalls of the rules in R:
\begin{equation}
	\mathit{Re}(E) = \sum_{i=1}^{n} \mathit{Re}(R_{i})
	\label{eq:recallSum}
\end{equation}
\end{theorem}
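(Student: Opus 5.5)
The plan is to compare the ensemble's counts with the individual counts directly, using the disjointness of the preconditions observed just before the statement. Throughout, all rules in $R$ are taken to share the same postcondition, so $\textbf{post}_i=\textbf{post}$ for every $i$. They must also come from the same decision tree, that is, the same feature and the same layer. Under these assumptions the set $P$ of inputs exhibiting the feature is the same for every rule. Hence the recall denominator $\text{TP}+\text{FN}=|P|$ is a common constant, fixed by the dataset (training or test) and independent of the rule. I will make these hypotheses explicit, because the claim is false in general: if two rules from different trees overlap, the ensemble recall can fall below the sum.

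Write $S_i$ for the set of inputs satisfying $\textbf{pre}_i$. Then $\text{TP}(R_i)=|S_i\cap P|$ and $\mathit{Re}(R_i)=|S_i\cap P|/|P|$. Following the updated definitions for ensembles given above, an input is a true positive for $E$ exactly when it lies in $P$ and satisfies at least one precondition. So
\[
\text{TP}(E)=\Bigl|\bigcup_{i=1}^{n} S_i\cap P\Bigr| .
\]
Every input of $P$ is either such a true positive or a false negative, so $\text{TP}(E)+\text{FN}(E)=|P|$ again.

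The key step is pairwise disjointness: $S_i\cap S_j=\emptyset$ for $i\neq j$. I argue it from the tree structure. Each $\textbf{pre}_i$ is the conjunction of the edge conditions along a root-to-leaf path. Two distinct paths share a longest common prefix and then split at some node $\text{N}_{x,y}$. At that split one path carries the condition $\text{N}_{x,y}\le t$ and the other carries $\text{N}_{x,y}>t$. Since the decision tree partitions the activation space, no activation vector can satisfy both. I expect this to be the main point needing care, since it relies on the rules coming from a single tree rather than from arbitrary FGA outputs.

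Disjointness gives $|\bigcup_i (S_i\cap P)|=\sum_i |S_i\cap P|$. Dividing by $|P|$ then yields \eqref{eq:recallSum}. The argument applies verbatim to both the training and test datasets, because disjointness is a property of the activation space and not of the sample.
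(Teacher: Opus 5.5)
Your proposal is correct and follows the same route as the paper's proof sketch. Both arguments use a common denominator $P=\mathit{TP}_i+\mathit{FN}_i$ for every rule, pairwise disjointness of preconditions coming from distinct leaves of a single decision tree (so the ensemble's true positives number $\sum_i \mathit{TP}_i$), and division by $P$. Your version is more careful than the paper's: you state the single-tree (same feature, same layer) hypothesis that the paper leaves implicit, and you justify disjointness by noting that two distinct root-to-leaf paths diverge at a node with complementary conditions $\le t$ and $>t$, which the paper only asserts.
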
 

\begin{proof}[Proof Sketch]
The recall of the rule $R_{i}$ (see \Cref{sec:precision}) is defined as
\begin{equation}
	\mathit{Re}(R_{i}) = \frac{\mathit{TP}_{i}}{\mathit{TP}_{i}+\mathit{FN}_{i}}
	\label{eq:recallDef}
\end{equation}
An ensemble considers rules that share the same post-condition (i.e., they all detect the same feature).
Note that all the rules are applied on the same dataset $\mathcal{D}$.
Let us identify as $P$ the total number of ground-truth positive (i.e., the number of images in the dataset that possess a feature).
Each rule can have a different number of $\mathit{TP}$ and $\mathit{FN}$, but the sum of these two values will always be equal to the overall number $P$ of inputs showing the feature.
Therefore: 
\begin{equation}
	P = \mathit{TP}_{i}+\mathit{FN}_{i}	\qquad\forall i : 1 \leq i \leq n
	\label{eq:DenomEqual}
\end{equation}
Note that since the rules are extracted from the leaf nodes of a decision tree, the preconditions of two different rules are pairwise disjoint (i.e., an image in $\mathcal{D}$ can satisfy at most the precondition of one rule).
Therefore, the number of True Positives for the composite rule is equal to the algebraic sum of the True Positives for each rule:
\begin{equation}
	\mathit{TP}_{c} = \sum_{i=1}^{n} \mathit{TP}_{i}
	\label{eq:sumTP}
\end{equation}

Therefore, the recall of the ensemble is:
\begin{align*}
	\mathit{Re}(E) 	&= \frac{\mathit{TP}_{c}}{P}
\end{align*}

To conclude, the recall of the ensemble equals the sum of the recalls of its rules.
\end{proof}

\begin{remark}
The same rule does not hold for the precision, since the number of true and false positives is not constant among different rules for the same feature.
\end{remark}

To compute the ensembles we propose three criteria:
\begin{enumerate}
    \item \texttt{TOP(X)}: Aggregates the best X rules (by train recall).
    \item \texttt{REC(X)}: Iteratively increases the size of the ensemble until it has train recall above X\%.
    \item \texttt{AVG}: Aggregates all the rules with train recall above the average of the list of rules obtained for that feature.
\end{enumerate}
While \texttt{TOP} and \texttt{REC} require the user to set a threshold value, \texttt{AVG} automatically selects the rules that will be part of the ensemble. As the rules are listed in descending order (by train recall), \texttt{TOP} aggregates the X rules that together guarantee the highest coverage of input images (showing the feature at the selected layer). This does not guarantee a specific level of recall, which can be reached by using \texttt{REC}. 
\texttt{REC(X)} increases the size of the ensemble until a minimum coverage level X of the input images showing the feature is reached.  \section{Implementation}
\label{sec:implementation}
\lstset{numbers=left,xleftmargin=4em,frame=single,framexleftmargin=3em, xrightmargin=1em}
\begin{figure}[t]
\centering
\begin{lstlisting}[caption = {Ensembles-based Feature Guided Analysis.}, label = {alg:pseudocode}]
function main(M, D, L, F, C) = { *\label{line:main}*
	rulesList = featureGuidedAnalysis(M,D,L,F)*\label{line:fga}*
	ensemblesList = buildEns(rulesList,C,$\text{D}_{tr}$)*\label{line:buildEnsemble}*
	E = evaluateEns(ensemblesList,$\text{D}_{te}$)*\label{line:evaluate}*
	return E *\label{line:return}*
}
\end{lstlisting}
\end{figure}

We implemented \efga as a plugin for \fga~\cite{Gopinath_2023}.
We consider the implementation from~\cite{FGA_Replication} since it is publicly available.
\Cref{alg:pseudocode} reports the pseudo code for the algorithm \efga, which uses the \fga implementation from \cref{sec:background}.
\Cref{line:main} defines the main function. The inputs are the trained neural network (\texttt{M}), the \Mnist or \Lymphoma dataset (\texttt{D}), the layers from which the rules will be extracted (\texttt{L}), the features chosen by the user (\texttt{F}) and the criterion required to build the ensemble of rules (\texttt{C}).
\Cref{line:fga}  calls the \fga function by passing as parameters the model (\texttt{M}), the dataset (\texttt{D}), the layers (\texttt{L}), and the features selected (\texttt{F}).
\Cref{line:buildEnsemble} aggregates the rules, sorted in descending order by number of inputs in the training dataset ($\texttt{D}_{tr}$) satisfying the rule. The rules are added to the ensemble starting with the one offering the highest train recall, until the desired criterion is matched. \Cref{line:evaluate} evaluates the ensembles of rules (\texttt{E}) by computing the Metrics on the test dataset ($\texttt{D}_{te}$). 
\Cref{line:return} returns the ensemble of rules (\texttt{E}).

 \section{Evaluation}\label{sec:eval}
We considered the following research questions:
\begin{itemize}
    \item \textbf{Influence of the Criterion Choice (RQ1)}. How does the choice of the criterion used to build the ensemble influence the effectiveness of \efga? (\Cref{sec:rq1})
    \item \textbf{Effectiveness of \efga (RQ2).} How effective is \efga compared with \fga? (\Cref{sec:rq2})
\end{itemize}

This first research question assesses the influence of the criterion to build the ensemble on the effectiveness of \efga. 
The second research question assesses the effectiveness of \efga by comparing its performance with \fga, which relies on the same technology but does not rely on ensembles.

\emph{Benchmarks}.
For the \Mnist dataset, we consider the neural network architecture (M-DNN1) proposed by the authors of Prophecy \cite{Gopinath_2019}, but not included in the original publication. The network has $10$ layers, with $2$ convolutional and $2$ dense layers with 128 and 10 neurons.
The activations of the convolutional and dense layers are considered as separate layers.
We did not train the neural network for this experiment, but we downloaded a pretrained version.
For the two dense layers, we respectively used a ReLU and a softmax activation function.

For the \Lymphoma dataset, we utilized the L-DNN1 and L-DNN2 architectures~\cite{janowczyk2016}.
The 12-layer CNN consists of a feature extractor component build of three sequential Convolution-ReLU-Pooling blocks, a classification section made by two fully-connected layers, and a \texttt{SoftmaxWithLoss} layer to compute the class probabilities.
L-DNN2, is an upgraded version of L-DNN1 with two ReLU-Dropout blocks after each of the two fully-connected layers.
We downloaded the pretrained Caffe model~\cite{lymphomacaffe2016} for the first 5-fold cross-validation split and converted it to the ONNX for use in our framework.

\subsection{Influence of the Criterion Choice (RQ1)}\label{sec:rq1}
To assess how the criterion used to build the ensemble affects the effectiveness of \efga, we proceeded as follows.

\begin{figure*}[ht!]
\centering
 \begin{subfigure}[b]{0.40\textwidth}
    \centering
       \includegraphics[width=1\linewidth]{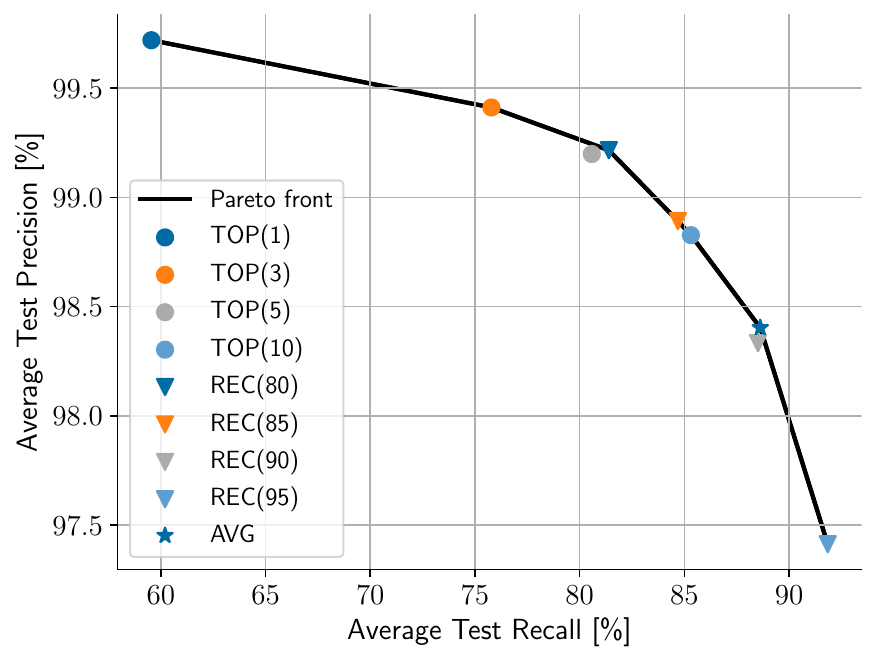}
    \caption{Test recall and precision for M-DNN1.}
    \label{fig:pareto-precision-m-dnn1}
\end{subfigure}
\hspace{10mm}
\begin{subfigure}[b]{0.40\textwidth}
    \centering
       \includegraphics[width=1\linewidth]{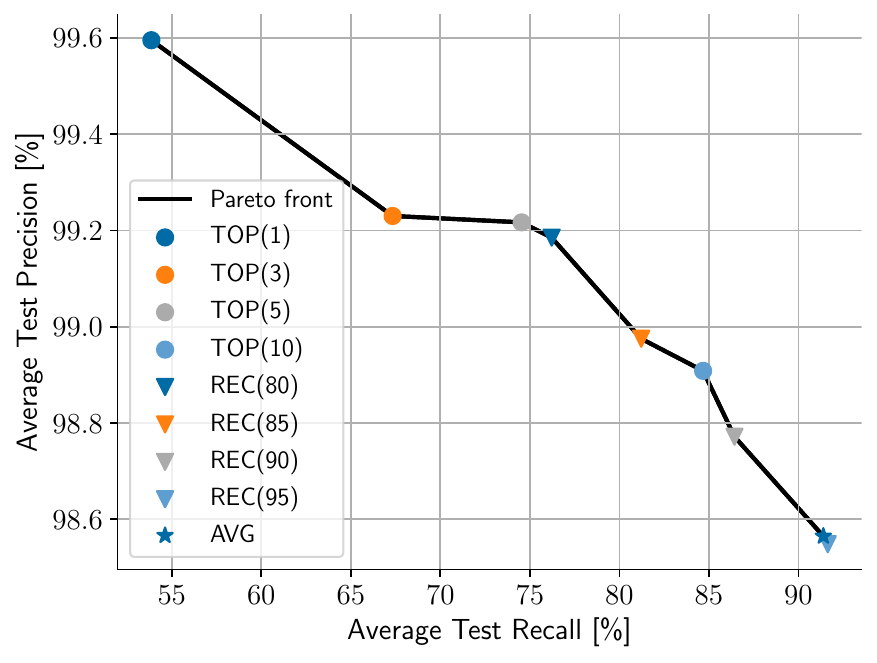}
    \caption{Test recall and precision for  L-DNN1.}
    \label{fig:pareto-precision-l-dnn1}
\end{subfigure}
\\
\begin{subfigure}[b]{0.40\textwidth}
    \centering
       \includegraphics[width=1\linewidth]{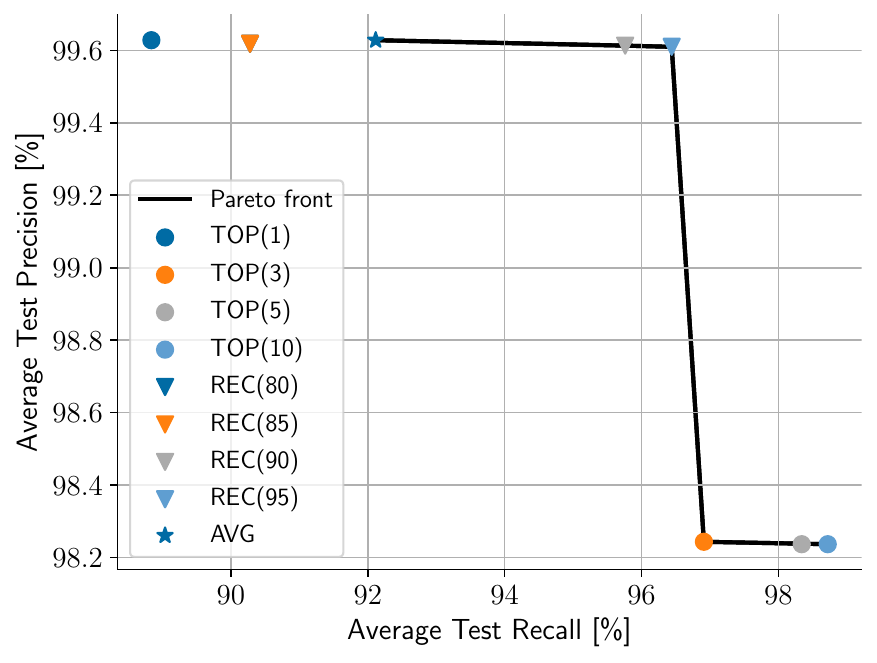}
    \caption{Test recall and precision L-DNN2. \phantom{A few words to add an extra line.}}
    \label{fig:pareto-precision-l-dnn2}
\end{subfigure}
\hspace{10mm}
\begin{subfigure}[b]{0.40\textwidth}
    \centering
       \includegraphics[width=1\linewidth]{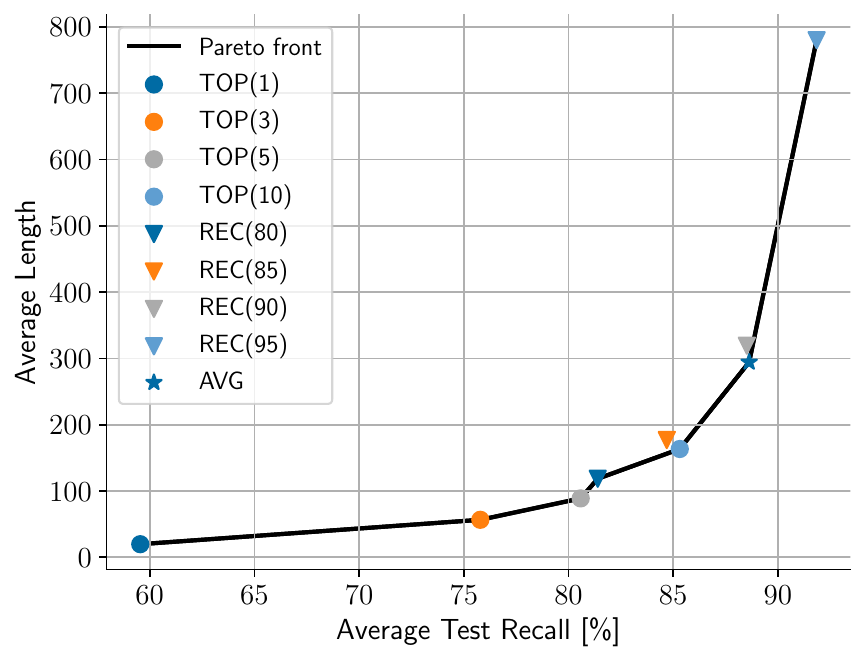}
    \caption{Test Recall and Length of Ensembles of Rules for M-DNN1.}
    \label{fig:pareto-len-m-dnn-1}
\end{subfigure}
\\
\begin{subfigure}[b]{0.40\textwidth}
    \centering
       \includegraphics[width=1\linewidth]{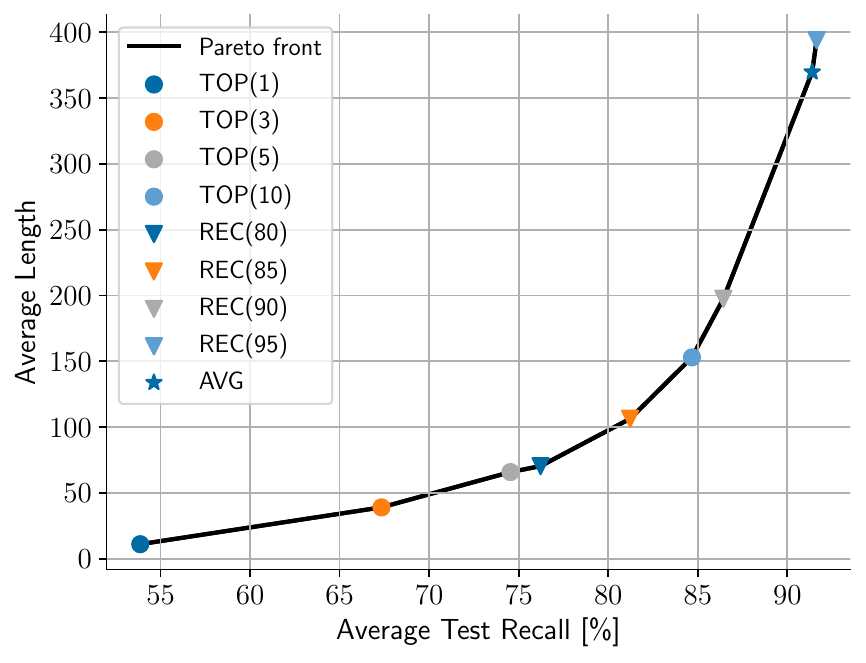}
    \caption{Test Recall and Length of Ensembles of Rules for L-DNN1.}
    \label{fig:pareto-len-l-dnn-1}
\end{subfigure}
\hspace{10mm}
\begin{subfigure}[b]{0.40\textwidth}
    \centering
       \includegraphics[width=1\linewidth]{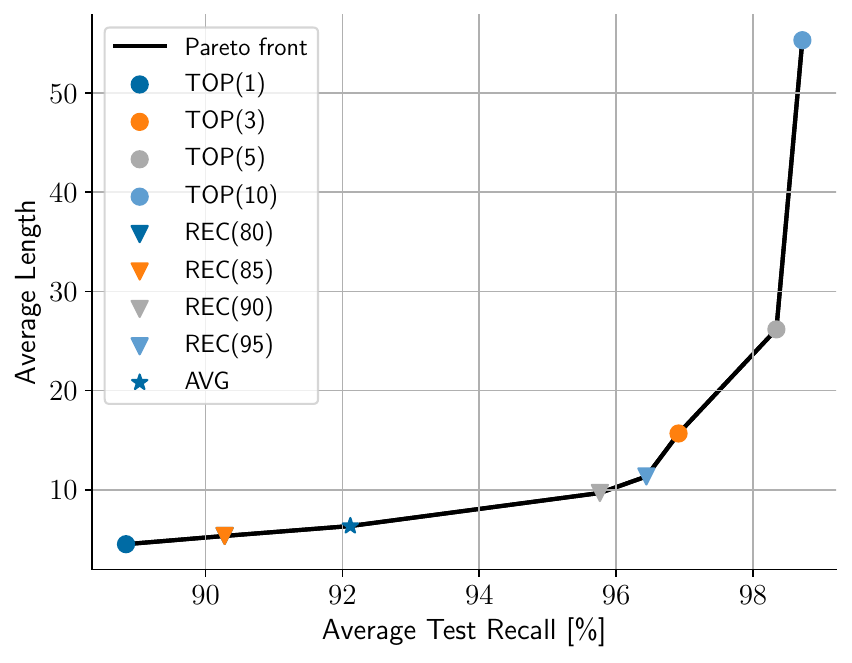}
    \caption{Test Recall and Length of Ensembles of Rules for L-DNN2.}
    \label{fig:pareto-len-l-dnn-2}
\end{subfigure}
\caption{Comparison of different \efga criteria.}
\label{fig:RQ1}
\end{figure*}

\textbf{Methodology}.
For \Mnist, we select the only hidden dense layer of the network (i.e., the neurons from the first dense layer, after the activation of the first dense layer, and before the activation of the last dense layer) to be considered for extracting the \fga rules.
We considered the features from the \fga replication study \cite{FGA_Replication} reported in \Cref{tab:mnist-features}.
These features include the presence of single digits ``0'', ``1'', $\ldots$, ``9'', couples of digits graphically similar ``2'' and ``7'', and ``9'' and ``6'', sets of digits sharing a similar characteristics such as the presence of a straight line (``1'', ``4'', and ``7'') or a circle (``0'', ``6'', ``8'', and ``9'').
Since the features we considered are directly related to classification labels used by the MNIST dataset, we could label the presence or absence of a feature automatically.

For \Lymphoma, we select the neurons from two dense layers to be considered by \fga and \efga since they are the only dense layers in the network.
For the features, we considered the three individual classes ``CLL'', ``FL'', and ``MCL'', and their pairwise combinations ``CLL \& FL'', ``CLL \& MCL'', and ``FL \& MCL''.

\begin{table}[t]
    \centering
    \caption{Name, set of digits, and description of the feature.}
    \begin{tabular}{l r l }
        \toprule
        \textbf{Feature}    &\textbf{Digits} & \textbf{Description} \\
        \midrule
         Digit 0 & 0 & Input image shows digit ``0''.\\
         \quad$\vdots$ & $\vdots\,$\\
         Digit 9 & 9 & Input image shows digit ``9''.\\
         2 and 7 & 2,7 & Digits graphically similar.\\
         9 and 6 & 9,6 & Digits graphically similar.\\
         Line & 1,4,7 & Four digits with a vertical line.\\
         Circle & 0,6,8,9 & Four digits with a circular shape.\\
        \bottomrule
    \end{tabular}
    \label{tab:mnist-features}
\end{table}

For every layer, we save all the rules extracted (e.g., all the paths from the root node to a pure leaf node of the tree). 
As only pure leaves are considered in the extraction process, all rules obtained have 100\% training precision.
For every feature, we created the ensemble using the list of rules of a single layer. We chose the layer with the rule with the highest train recall for the feature under consideration (i.e., the single rule that is satisfied by the highest number of images in the dataset).

For \texttt{TOP(X)}, we computed the ensemble aggregating 1, 3, 5, and 10 rules. 
Note that the ensemble produced by \texttt{TOP(1)} corresponds to the rule produced by \fga. 
We considered 80\%, 85\%, 90\%, and 95\% as threshold values for \texttt{REC(X)},

For every ensemble, we computed train and test precision and recall. 
Furthermore, we computed the length (the sum of the lengths of the rules in the ensemble) of each ensemble. 
For every criterion and threshold value, we computed the average metrics over all the features.

\textbf{Results}.
\Cref{fig:RQ1} presents scatter plots for each model containing the results for RQ1. The markers (dot, triangle, and star) represent the average value across all features. 
\Cref{fig:pareto-precision-m-dnn1,fig:pareto-precision-l-dnn1,fig:pareto-precision-l-dnn2} report average test recall on the x-axis and average test precision on the y-axis. 
The line represents the Pareto front.
\Cref{fig:pareto-len-m-dnn-1,fig:pareto-len-l-dnn-1,fig:pareto-len-l-dnn-2} report average test recall on the x-axis and average length of the ensemble on the y-axis. 

For M-DNN1, \Cref{fig:pareto-precision-m-dnn1} shows that the ensembles created with the criterion \texttt{REC(90)} and \texttt{TOP(5)} are not optimal, since other solutions from the Pareto front that dominate these solutions (e.g., \texttt{REC(80)} offers an  average test recall and precision higher than \texttt{TOP(5)} and \texttt{REC(90)}).
The figure also shows that the test precision decreases as the test recall increases. 
\texttt{REC(95)} is the solution with the highest recall.
It offers a significantly higher (+32.3\%) recall than \texttt{TOP(1)} with a negligible (-2.31\%) reduction of the precision. 
\Cref{fig:pareto-len-m-dnn-1} shows that the ensembles created with the criterion \texttt{REC(85)} and \texttt{REC(90)} are dominated by other solutions (\texttt{TOP(10)}) that offer a lower  average lengths for the rules and a higher test recall.
The figure also shows that the length of the ensemble increases with the test recall. 
\texttt{REC(95)} is the solution with the highest length and recall.
It offers a significantly higher (+32.3\%) recall than \texttt{TOP(1)}, but its length is considerably higher (+759.92).
Together, the two scatter plots from \Cref{fig:pareto-len-m-dnn-1,fig:pareto-precision-m-dnn1} show that increasing the recall comes with a cost: Longer ensembles.
The choice of the best solution from the Pareto front is application-specific. 
It depends on the application domain, desired recall, and length of the rules. 
\texttt{TOP(10)} offers a reasonable compromise: it reaches a recall higher than 85\% and generates ensembles with a length lower than 200.

For L-DNN1 (\Cref{fig:pareto-precision-l-dnn1}), we observe a trend similar to M-DNN1: \texttt{TOP(1)} shows limited recall (53.86\%). The criteria \texttt{REC(95)} and AVG achieve the highest recall (>90\%) but with a reduction in precision. \texttt{TOP(10)} represents a balanced trade-off, achieving 84.68\% recall with precision close to 99\%.
As shown in \Cref{fig:pareto-len-l-dnn-1}, L-DNN1 shows a trade-off similar to M-DNN1. While TOP(1) yields short ensembles, increasing the recall requires significantly more complex rules.

For L-DNN2 (\Cref{fig:pareto-precision-l-dnn2}), the base model (\texttt{TOP(1)}) already exhibits high recall (89.02\%) and precision (99.63\%). Increasing the ensemble size with \texttt{REC(95)} improves recall to 96.44\% while maintaining high precision. However, criteria like \texttt{TOP(10)} push recall to 98.72\% but suffer a drop in precision (98.24\%).
L-DNN2 exhibits much more compact ensembles than M-DNN1 and L-DNN1 (\Cref{fig:pareto-len-l-dnn-2}).
Also the base TOP(1) rules (length 4.50) achieve good recall. The criterion REC(95) boosts recall with a very modest length of 11.33. The largest ensembles generated by TOP(10) reach 98.72\% recall with a length of 55.33, remaining significantly shorter than those for M-DNN1 or L-DNN1.

\begin{Answer}[RQ1---Influence of the Criterion]
The results for M-DNN1 and L-DNN1 demonstrate that the choice of the \efga criterion can increase the recall to 37.77\% with a negligible (-1.05\%) reduction of the precision. 
However, increasing the recall comes with a cost: Longer ensembles.
To reach a recall of 95\%, the length of the ensemble increased from 19.79 to 779.71.
TOP(10) offers a reasonable compromise between rule length and recall across the datasets, keeping the length below 200 in most cases.
For L-DNN2, the results show that for this DNN a limited number of rules achieve high recall and adding further rules does not significantly improve performance. 
\end{Answer} \subsection{Effectiveness of \efga (RQ2)}
\label{sec:rq2}
To answer the second research question, we compare the best rules extracted from \efga (configured with the we select the \texttt{TOP(10)} ensemble from \Cref{sec:rq1}) and \fga. 

\textbf{Methodology}.
To compare \efga and \fga, we compute the ensemble using the same methodology, layers, and features from \Cref{sec:rq1} and  performed experiments \emph{Exp1} and \emph{Exp2}.

 \emph{Exp1}: We compare \efga and \fga when applied to M-DNN1, L-DNN1 and L-DNN2. Note that M-DNN1 was considered over other DNNs since it has the lowest train and test recall; the metrics \efga aims to improve.

\emph{Exp2}: We also compare the results of \efga from M-DNN1 to the results produced by \fga applied to M-DNN3, an additional network architecture used in \cite{FGA_Replication}.
    This network has the best results in terms of training and test recall.
    We did not consider this network for RQ1 since results would not be informative: The average train and test recall provided by \fga over the considered features are already above 84\%.
    However, in this experiment, we assess whether \efga enables reaching results comparable to those reached by \fga on other datasets where it is effective.
    Therefore, we compare the results obtained with \fga over M-DNN1 with those of \efga over M-DNN3.

We present the results of each experiment separately.

\begin{table*}[t]
    \centering
    \caption{Train ($R_{tr}$) and Test ($R_{te}$) Recall and the Precision on the Test dataset ($P_{te}$) of \fga and \efga for M-DNN1.}
    \footnotesize
    \begin{tabular}{l | r r r | r r r | r r r}
        \toprule
        \multirow{2}{5mm}{\textbf{Feature}}   &\multicolumn{3}{c|}{\textbf{\fga}}  &\multicolumn{3}{c|}{\textbf{\efga}}&\multicolumn{3}{c}{\textbf{Differences}}\\
          \cmidrule{2-10}
            & $\mathbf{R_{tr}}$    & $\mathbf{P_{te}}$  & $\mathbf{R_{te}}$ & $\mathbf{R_{tr}}$    & $\mathbf{P_{te}}$    & $\mathbf{R_{te}}$ & $\mathbf{R_{tr}}$    & $\mathbf{P_{te}}$    & $\mathbf{R_{te}}$\\
        \midrule
        Digit 0 & 82.54 & 99.64 & 85.51 & 95.36 & 98.30 & 94.18 & 12.82 & -1.34 & 8.67 \\
        Digit 1 & \cellcolor{blue!25}87.66 & 99.90 & \cellcolor{blue!25}88.99 & \cellcolor{blue!25}96.99 & 99.10 & \cellcolor{blue!25}97.44 & \cellcolor{orange!25}9.33 & -0.80 & \cellcolor{orange!25}8.45 \\
        Digit 2 & 64.99 & 99.57 & 67.73 & 91.84 & 99.04 & 89.73 & 26.85 & -0.53 & 22.00 \\
        Digit 3 & 69.48 & \cellcolor{blue!25}100.0 & 70.40 & 88.57 & 98.89 & 87.82 & 19.09 & -1.11 & 17.42 \\
        Digit 4 & 73.52 & 99.73 & 74.64 & 90.98 & 99.33 & 90.33 & 17.46 & -0.40 & 15.69 \\
        Digit 5 & 70.60 & \cellcolor{blue!25}100.0 & 72.76 & 89.98 & 99.50 & 89.24 & 19.38 & -0.50 & 16.48 \\
        Digit 6 & 75.65 & 99.58 & 74.74 & 91.87 & 98.38 & 88.73 & 16.22 & -1.20 & 13.99 \\
        Digit 7 & 48.27 & \cellcolor{blue!25}100.0 & 51.17 & 86.75 & \cellcolor{orange!25}98.14 & 82.30 & 38.48 & \cellcolor{orange!25}-1.86 & 31.13 \\
        Digit 8 & 53.89 & 99.81 & 54.83 & 85.28 & 98.50 & 80.90 & 31.39 & -1.31 & 26.07 \\
        Digit 9 & 54.24 & 99.31 & 56.79 & 78.45 & 98.49 & 77.70 & 24.21 & -0.82 & 20.91 \\
        2 and 7 & 39.19 & 99.64 & 39.76 & 80.15 & 99.23 & 80.83 & 40.96 & -0.41 & 41.07 \\
        9 and 6 & 35.27 & \cellcolor{orange!25}99.26 & 34.16 & 80.59 & 98.60 & 78.85 & 45.32 & -0.66 & 44.69 \\
        Line & \cellcolor{orange!25}27.81 & \cellcolor{blue!25}100.0 & \cellcolor{orange!25}28.11 & 86.52 & \cellcolor{blue!25}99.67 & 85.41 & \cellcolor{blue!25}58.71 & \cellcolor{blue!25}-0.33 & \cellcolor{blue!25}57.30 \\
        Circle & 32.83 & 99.63 & 34.10 & \cellcolor{orange!25}71.75 & 98.41 & \cellcolor{orange!25}70.93 & 38.92 & -1.22 & 36.83 \\
        \midrule
        Average & 58.28 & 99.72 & 59.55 & 86.79 & 98.83 & 85.31 & 28.51 & -0.89 & 25.76 \\
        \bottomrule
        \end{tabular}
    \label{tab:rq2-mnist}
\end{table*}

\begin{table*}[t]
    \centering
    \caption{Train ($R_{tr}$) and Test ($R_{te}$) Recall and the Precision on the Test dataset ($P_{te}$) of \fga and \efga for L-DNN1.}
    
    \begin{tabular}{l | r r r | r r r | r r r}
        \toprule
        \multirow{2}{5mm}{\textbf{Feature}}   &\multicolumn{3}{c|}{\textbf{\fga}}  &\multicolumn{3}{c|}{\textbf{\efga}}&\multicolumn{3}{c}{\textbf{Differences}}\\
          \cmidrule{2-10}
            & $\mathbf{R_{tr}}$    & $\mathbf{P_{te}}$  & $\mathbf{R_{te}}$ & $\mathbf{R_{tr}}$    & $\mathbf{P_{te}}$    & $\mathbf{R_{te}}$ & $\mathbf{R_{tr}}$    & $\mathbf{P_{te}}$    & $\mathbf{R_{te}}$\\
        \midrule
        CLL & \cellcolor{blue!25}65.63 & 99.13 & \cellcolor{blue!25}61.94 & 89.86 & 98.42 & 86.09 & 24.23 & -0.71 & 24.15 \\
        FL & 62.71 & \cellcolor{orange!25}99.07 & 56.17 & \cellcolor{orange!25}86.04 & \cellcolor{orange!25}97.54 & \cellcolor{orange!25}79.59 & \cellcolor{orange!25}23.33 & \cellcolor{blue!25}-1.53 & \cellcolor{orange!25}23.42 \\
        MCL & 60.05 & 99.94 & 56.80 & \cellcolor{blue!25}91.57 & 99.47 & \cellcolor{blue!25}87.03 & 31.52 & -0.47 & 30.23 \\
        CLL \& FL & 51.31 & 99.46 & 55.73 & 87.05 & 98.84 & 85.32 & 35.74 & -0.62 & 29.59 \\
        MCL \& FL & \cellcolor{orange!25}42.11 & \cellcolor{blue!25}99.99 & \cellcolor{orange!25}39.32 & 89.10 & \cellcolor{blue!25}99.81 & 83.86 & \cellcolor{blue!25}46.99 & \cellcolor{orange!25}-0.18 & \cellcolor{blue!25}44.54 \\
        CLL \& MCL & 53.33 & 99.98 & 53.22 & 90.43 & 99.37 & 86.16 & 37.10 & -0.61 & 32.94 \\
        \midrule
        Average & 55.86 & 99.59 & 53.86 & 89.01 & 98.91 & 84.68 & 33.15 & -0.69 & 30.81 \\
        \bottomrule
    \end{tabular}
    \label{tab:rq2-l-dnn1}
\end{table*}

\begin{table*}[t]
    \centering
    \caption{Train ($R_{tr}$) and Test ($R_{te}$) Recall and the Precision on the Test dataset ($P_{te}$) of \fga and \efga for L-DNN2.}
    \begin{tabular}{l | r r r | r r r | r r r}
        \toprule
        \multirow{2}{5mm}{\textbf{Feature}}   &\multicolumn{3}{c|}{\textbf{\fga}}  &\multicolumn{3}{c|}{\textbf{\efga}}&\multicolumn{3}{c}{\textbf{Differences}}\\
          \cmidrule{2-10}
            & $\mathbf{R_{tr}}$    & $\mathbf{P_{te}}$  & $\mathbf{R_{te}}$ & $\mathbf{R_{tr}}$    & $\mathbf{P_{te}}$    & $\mathbf{R_{te}}$ & $\mathbf{R_{tr}}$    & $\mathbf{P_{te}}$    & $\mathbf{R_{te}}$\\
        \midrule
        CLL & 94.02 & 99.57 & 97.76 & 100.0 & \cellcolor{orange!25}93.51 & 99.88 & 5.98 & \cellcolor{orange!25}-6.06 & 2.12 \\
        FL & 89.30 & 99.78 & 85.19 & 100.0 & 99.69 & \cellcolor{orange!25}95.72 & 10.70 & -0.09 & 10.53 \\
        MCL & \cellcolor{blue!25}99.92 & 99.90 & \cellcolor{blue!25}99.57 & 100.0 & 99.89 & 99.59 & \cellcolor{orange!25}0.08 & \cellcolor{blue!25}-0.01 & \cellcolor{orange!25}0.02 \\
        CLL \& FL & \cellcolor{orange!25}76.36 & 99.87 & \cellcolor{orange!25}77.59 & 100.0 & 99.83 & \cellcolor{blue!25}99.96 & \cellcolor{blue!25}23.64 & -0.04 & \cellcolor{blue!25}22.37 \\
        MCL \& FL & 88.17 & \cellcolor{blue!25}99.99 & 85.80 & 100.0 & \cellcolor{blue!25}99.96 & 97.43 & 11.83 & -0.03 & 11.63 \\
        CLL \& MCL & 86.36 & \cellcolor{orange!25}98.66 & 87.11 & 100.0 & 96.54 & 99.75 & 13.64 & -2.12 & 12.64 \\
        \midrule
        Average & 89.02 & 99.63 & 88.84 & 100.0 & 98.24 & 98.72 & 10.98 & -1.39 & 9.88 \\
        \bottomrule
    \end{tabular}
    \label{tab:rq2-l-dnn2}
\end{table*}

\begin{table*}[t]
    \centering
    \caption{Train ($R_{tr}$) and Test ($R_{te}$) Recall and the Precision on the Test dataset ($P_{te}$) of \fga for M-DNN3 and \efga for M-DNN1.}
    
    \begin{tabular}{l | r r r | r r r | r r r}
        \toprule
        \multirow{2}{5mm}{\textbf{Feature}}   &\multicolumn{3}{c|}{\textbf{\fga ~-- M-DNN3}}  &\multicolumn{3}{c|}{\textbf{\efga ~-- M-DNN1}}&\multicolumn{3}{c}{\textbf{Differences}}\\
          \cmidrule{2-10}
            & $\mathbf{R_{tr}}$    & $\mathbf{P_{te}}$  & $\mathbf{R_{te}}$ & $\mathbf{R_{tr}}$    & $\mathbf{P_{te}}$    & $\mathbf{R_{te}}$ & $\mathbf{R_{tr}}$    & $\mathbf{P_{te}}$    & $\mathbf{R_{te}}$\\
        \midrule
        Digit 0&86.11&99.66&89.76&95.36&98.30&94.18&9.25&-1.36&4.42\\
        Digit 1&93.37&99.91&94.87&\cellcolor{blue!25}96.99&99.10&\cellcolor{blue!25}97.44&3.62&-0.81&2.57\\
        Digit 2&\cellcolor{orange!25}67.56&99.32&\cellcolor{orange!25}70.89&91.84&99.04&89.73&\cellcolor{blue!25}24.28&-0.28&\cellcolor{blue!25}18.84\\
        Digit 3&82.47&\cellcolor{orange!25}98.94&83.76&88.57&98.89&87.82&6.10&-0.05&4.06\\
        Digit 4&\cellcolor{blue!25}94.81&99.36&\cellcolor{blue!25}95.47&90.98&99.33&90.33&-3.83&-0.03&-5.14\\
        Digit 5&85.53&99.21&85.70&89.98&99.50&89.24&4.45&\cellcolor{blue!25}0.29&3.54\\
        Digit 6&79.37&99.74&81.55&91.87&98.38&88.73&12.50&-1.36&7.18\\
        Digit 7&93.14&99.37&93.11&86.75&\cellcolor{orange!25}98.14&82.30&-6.39&-1.23&-10.81\\
        Digit 8&75.16&\cellcolor{blue!25}100.00&76.59&85.28&98.50&80.90&10.12&\cellcolor{orange!25}-1.50&4.31\\
        Digit 9&92.77&99.35&92.66&78.45&98.49&77.70&\cellcolor{orange!25}-14.32&-0.86&\cellcolor{orange!25}-14.96\\
        2 and 7&79.62&99.35&81.94&80.15&99.23&80.83&0.53&-0.12&-1.11\\
        6 and 9&83.17&99.68&81.36&80.59&98.60&78.85&-2.58&-1.08&-2.51\\
        Line & 91.32&99.62&92.27&86.52&\cellcolor{blue!25}99.67&85.41&-4.80&0.05&-6.86\\
        Circle &76.51&99.34&77.63&\cellcolor{orange!25}71.75&98.41&\cellcolor{orange!25}70.93&-4.76&-0.93&-6.70\\
        \midrule
        Average&84.35&99.49&85.54&86.79&98.83&85.31&2.44&-0.66&-0.23\\
        \bottomrule
    \end{tabular}
    \label{tab:rq2-m-dnn3}
\end{table*}
\textbf{Results - Exp1}.
\Cref{tab:rq2-mnist}, \Cref{tab:rq2-l-dnn1}, and \Cref{tab:rq2-l-dnn2} show the train ($R_{tr}$) and test ($R_{te}$) recall and the precision on the test dataset ($P_{te}$) of the rules computed by \fga and \efga for each feature of our and the M-DNN1, L-DNN1, and L-DNN2.
The tables also show the difference in the effectiveness of \fga and \efga.
The cells with the highest absolute value for a column are highlighted with background color blue, while cells with the lowest absolute value for a column are highlighted with background color orange.
We discuss the results related to each metric in the following.

\emph{Train Recall}. On M-DNN1, for \fga, the rules for the feature ``Digit 1'' and ``Line'' have the highest (87.66\%) and lowest (27.81\%) train recall. 
For \efga, the ensembles for the feature ``Digit 1'' and ``Circle'' have the highest (96.99\%) and lowest (71.75\%) train recall.
The rules for the feature ``Line'' and ``Digit 1'' have the highest (58.71\%) and lowest (9.33\%) absolute difference among train recall of \fga and \efga.
\fga and \efga have an average train recall of 58.28\% and 86.79\%. 
The average improvement is +28.51\% and reaches 58.71\% for some features.
On L-DNN1, \efga consistently outperforms \fga. The average train recall increases from 55.86\% (\fga) to 89.01\% (\efga), an improvement of +33.15\%. The feature ``MCL \& FL'' shows the largest increase in train recall (+46.99\%).
On L-DNN2, for \fga, the rules for the feature ``MCL'' and ``CLL \& FL'' have the highest (99.92\%) and lowest (76.36\%) train recall. For \efga, the ensembles for all features reach 100\% train recall. The average improvement is +10.98\%.
These results show that \efga considerably improves \fga for the train recall metric across all datasets.

\emph{Test Precision}. 
On M-DNN1, for \fga, the rules for the feature ``Digit 3'', ``Digit 5'', ``Digit 7'' and ``Line'' have the highest (100\%) test precision, while ''9 and 6'' have the lowest (99.26\%) one. 
For \efga the ensembles for the features ``Line'' and ``Digit 7'' have the highest (99.67\%) and the lowest (98.14\%) test precision.
The rules for the feature ``Digit 7'' and ``Line'' have the highest (1.86\%) and lowest (0.33\%) absolute difference among the test precision of \fga and \efga.
\fga and \efga have an average test precision of 99.72\% and 98.83\%. Therefore, \efga produces ensembles that are slightly less precise than the rules obtained with \fga.
The decrement in the average test precision is negligible: It is on average 0.89\% (\emph{min}=0.33, \emph{max}=1.86, \emph{std}=0.44).
On L-DNN1, \fga and \efga have an average test precision of 99.59\% and 98.91\%. The decrement is negligible (0.69\%).
Finally, on L-DNN2, \fga and \efga have an average test precision of 99.63\% and 98.24\%. The decrement is small (1.39\%), although slightly higher than L-DNN1, due to the drop in the ``CLL'' feature.
Therefore, \efga produces ensembles that are slightly less precise than the rules computed by \fga.

\emph{Test Recall}.
On M-DNN1, for \fga, the rules for the feature ``Digit 1'' and ``Line'' have the highest (88.99\%) and lowest (28.11\%) test recall. 
For \efga, the ensembles for the features ``Digit 1'' and ``Circle'' have the highest (97.44\%) and lowest (70.93\%) test recall.
The rules for the feature ``Line'' and ``Digit 1'' have the highest (57.30\%) and lowest (8.45\%) absolute difference among test recall of \fga and \efga.
\fga and \efga have an average test recall of 59.55\% and 85.31\%. 
This result shows that \efga significantly improves \fga for the test recall metric. 
The average improvement is +25.76\% and reaches 57.30\% for some features.
On L-DNN1, the average test recall increases significantly from 53.86\% to 84.68\% (+30.81\%). The rules for the feature ``MCL \& FL'' and ``FL'' have the highest (44.54\%) and lowest (23.42\%) absolute difference among test recall of \fga and \efga.
On L-DNN2, \fga and \efga have an average test recall of 88.84\% and 98.72\%. The average improvement is +9.88\%.
These results confirm that \efga significantly improves \fga for the test recall metric in all considered scenarios.

\textbf{Results - Exp2}. \Cref{tab:rq2-m-dnn3} reports the results for \fga on M-DNN3 and \efga on M-DNN1. We colored the cells with the minimum (orange) and maximum (blue) absolute difference.

\emph{Train Recall}. \fga and \efga have a comparable train recall: 
The average recall is respectively 84.35\% (Max: 94.81\%, Min: 67.56\%, StdDev: 8.19\%) and 86.79\% (Max: 96.99\%, Min: 71.75\%, StdDev: 6.76\%).
\efga on M-DNN1 produces (on average) results with an higher (+2.44\%) train recall than \fga on M-DNN3.
The rules for the feature ``Digit 2'' and ``Digit 2 and 7'' have the highest (24.28\%) and lowest (0.53\%) absolute difference among train recall of \fga and \efga.
This result shows that when the train recall of \fga is low, \efga can produce results comparable to a more suitable DNN for the \fga approach.

\emph{Test Precision}. 
\fga and \efga have a comparable test precision: 
The average precision is respectively 99.49\% (Max: 100.00\%, Min: 98.94\%, StdDev: 0.29\%) and 98.83\% (Max: 99.67\%, Min: 98.14\%, StdDev: 0.47\%).
\efga on M-DNN1 produces (on average) results with a lower (-0.66\%) test precision than \fga on M-DNN3.
The rules for features ``Digit 8'' and ``Digit 4'' have the highest (1.50\%) and lowest (0.03\%) absolute difference among the \fga and \efga test precision.
This result shows that considering the \efga ensemble-based approach does not significantly affect the test precision.

\emph{Test Recall}.  \fga and \efga have a comparable test recall: 
The average recall is respectively 85.54\% (Max: 95.47\%, Min: 70.89\%, StdDev: 7.66\%) and 85.31\% (Max: 97.44\%, Min: 70.93\%, StdDev: 6.86\%).
\efga on M-DNN1 produces (on average) results with a lower (-0.23\%) test recall than \fga on M-DNN3.
The rules for the feature ``Digit 2'' and ``Digit 2 and 7'' have the highest (18.84\%) and lowest (1.11\%) absolute difference among test recall of \fga and \efga.
This result shows that considering the \efga ensemble-based approach does not significantly affect the test recall.

\begin{Answer}[RQ2---Effectiveness of \efga]
Our results (M-DNN1 and L-DNN1) show that \efga produce ensembles with considerably higher train (up to +33.15\%) and test (up to +30.81\%) recall, and negligible reductions (avg. max -1.39\%) in test precision than the rules computed by \fga.
For L-DNN2, the improvements are smaller due to the higher recall of the original \fga rules.
When the train and test recall of \fga is insufficient,  \efga can produce results comparable to cases where \fga has shown to be effective.
\end{Answer}

\section{Discussion and Threats to Validity}
\label{sec:discussion}

\efga requires engineers to select the layer at which the ensemble of rules is created. 
Alternatively, \efga could be instrumented to create ensembles of rules at every layer, and to subsequently select the one with the highest recall. 

The results from RQ1 show that the length of the ensemble increases as its test recall, and that increasing the test recall of the ensemble reduces its test precision.
However, our results on the \Lymphoma benchmark (e.g., L-DNN2) highlight that the cost in terms of length is strictly related to the performance of the underlying DNN: for networks that already perform well, the ensembles remain compact even at very high recall levels.
Furthermore, it also highlights the difference in the effectiveness of our criteria: Depending on the user's preference, a different criterion can be selected.
We also notice that (i) the recall of all the criteria are better than \texttt{TOP(1)} (which corresponds to \fga). This result is expected since each criterion adds rules to \texttt{TOP(1)}, and according to \Cref{theor:recallComp} this action increases the recall of the ensemble, and (b) Most criteria are located on the Pareto front, offering a wider choice of rules that provide alternative tradeoffs between precision and recall.

The results from RQ2 show that \efga produces a significant improvement in train and test recall with a negligible reduction in test precision. 
This result show that, \efga can identify a higher number of input images with a specific feature compared to \fga.
The features that showed the lowest train recall with \fga benefit the most from  \efga. 
These findings are confirmed by our experiments on the \Lymphoma dataset, demonstrating that \efga is effective across different domains. Moreover, the results on L-DNN2 show that \efga allows reaching near-perfect recall (100\%) even when the baseline \fga performance is already high, demonstrating its utility in refining high-performing models.
The comparison between \fga applied to M-DNN1 and \efga applied to M-DNN3 suggests that \efga over M-DNN1 produces (on average) an ensemble of rules with performances comparable with those obtained for \fga applied to the best DNN reported in \cite{FGA_Replication}. 
This demonstrates that the user can apply the approach to DNN architectures that produce individual rules with low recall, obtaining performances comparable to those of other DNNs. 
Note that the differences for individual features (in terms of performances) can vary greatly between the performances of the top rule obtained for \fga over M-DNN3 and \efga over M-DNN1.
Our results show that this limitation is successfully addressed by \efga. 

\emph{Threats to Validity}.
The choice of the \Mnist dataset can threaten the \emph{external validity} of our results.
We mitigated this threat by evaluating \efga on two distinct benchmarks: \Mnist and \Lymphoma.
\Mnist is a popular dataset used in the ML field, allowing comparison with previous studies.
The choice of the layer and the selection criterion can threaten the \emph{internal validity} of our results since considering a different layer could lead to different results. 
To mitigate this threat, we proposed a list of criteria covering alternative scenarios.
Our list is not exhaustive, and other ad-hoc criteria can be considered. \section{Related Works}
\label{sec:related}
We present related works that applied an approach based on an ensemble of methods, models, or algorithms and we discuss other works focused on the explainability of DNN.

\emph{Ensembles-based approaches}. The aggregation of models or rules to increase performances or explainability of the models has been extensively considered in the fields of machine learning and software engineering (\cite{liu2015,chen2022, moussa2022,zhong2024,yang25,lu22}).
Moussa et al.~\cite{moussa2022} propose an ensemble of ML models to predict the presence of defects in software.
Chen et al.~\cite{chen2022} propose an ensemble of two models to address the presence of human biases in ML algorithms. Liu et al.~\cite{liu2015} propose an ensemble of rules-based classifiers, that combines models trained on different partitions of the dataset to reduce the effects of overfitting of individual models.
Our approach differs from the aforementioned ones since we do not aggregate different models, but we aggregate decision rules extracted from the same model.
Lu et al.~\cite{lu22} describe an ensemble of rules approach to increase the explainability of ML models for fraud detection. It combines rules from tree-based models with those extracted from a computational graph. Our approach differs since aggregate only rules extracted from a decision tree.

\emph{Analysis of DNNs}. Many works (\cite{zhou2018,kim2018,yeh2020,kusters2020,ghorbani2019,koh2020,chen2020concept,barbiero2022entropy}) belong to the area of concept-based explanation of DNNs, which was summarized in recent surveys (\cite{Lee25,Poeta2025}). According to the definitions provided in Poeta et al.~\cite{Poeta2025} (which is also adopted by Lee et al.~\cite{Lee25}), our work is close to symbolic concept explanation, which is defined as a human interpretable abstraction.
Of these works, Zhou et al.~\cite{zhou2018} describe an approach to decompose the prediction of a DNN into a set of interpretable concepts, how relevant each concept is to the final prediction. Some of the methods in the field of concept explanation tried to provide an approach that explains the predictions of a DNN at training time. For example, Chen et al.~\cite{chen2020concept} describes a method that normalizes the latent space of the DNN with respect to a given concept.

This work also shares similarities with DNN repair (i.e., the repair of neural networks), which has been addressed by several works \cite{Lyu2024,Mancu2024,calsi2023,Fahmy2022}.
These works differ from our as they aim to perform distributed repair, which re-trains the DNN, while the focus of \efga is on the interpretation of the model behavior.  
Lastly, Fahmy et al. \cite{fahmy2023} proposed an approach that identifies misclassified images, and generates a new dataset of similar ones for retraining. %
 \section{Conclusion and Future Works}
\label{sec:conclusion}
This paper introduced \efga, an extension of \fga that increases the recall of its rules by aggregating them into ensembles. 
We compared the performances of the ensembles generated by different criteria. Our results show that \efga configured with a proper criterion can increase significantly the recall compared to \fga, with a negligible reduction in precision. 
We also compared the results obtained with \fga for the best-performing DNN reported in \cite{FGA_Replication} with those of \efga for a different DNN. 
Our results show that \efga can produce ensembles of rules with performances comparable to those obtained by \fga for other DNN architectures.

\section*{Data Availability}
Our replication package is available online \url{https://figshare.com/account/articles/31691773} \cite{ReplicationPackage}.

\bibliographystyle{IEEEtran}

\begin{thebibliography}{10}
\providecommand{\url}[1]{#1}
\csname url@samestyle\endcsname
\providecommand{\newblock}{\relax}
\providecommand{\bibinfo}[2]{#2}
\providecommand{\BIBentrySTDinterwordspacing}{\spaceskip=0pt\relax}
\providecommand{\BIBentryALTinterwordstretchfactor}{4}
\providecommand{\BIBentryALTinterwordspacing}{\spaceskip=\fontdimen2\font plus
\BIBentryALTinterwordstretchfactor\fontdimen3\font minus
  \fontdimen4\font\relax}
\providecommand{\BIBforeignlanguage}[2]{{\expandafter\ifx\csname l@#1\endcsname\relax
\typeout{** WARNING: IEEEtran.bst: No hyphenation pattern has been}\typeout{** loaded for the language `#1'. Using the pattern for}\typeout{** the default language instead.}\else
\language=\csname l@#1\endcsname
\fi
#2}}
\providecommand{\BIBdecl}{\relax}
\BIBdecl

\bibitem{khan2022software}
\BIBentryALTinterwordspacing
M.~A. Khan, N.~S. Elmitwally, S.~Abbas, S.~Aftab, M.~Ahmad, M.~Fayaz, and
  F.~Khan, ``Software defect prediction using artificial neural networks: A
  systematic literature review,'' \emph{Scientific Programming}, p. 2117339,
  2022. [Online]. Available: \url{https://doi.org/10.1155/2022/2117339}
\BIBentrySTDinterwordspacing

\bibitem{boujida2024neural}
\BIBentryALTinterwordspacing
F.~E. Boujida, F.~A. Amazal, and A.~Idri, ``Neural networks-based software
  development effort estimation: A systematic literature review,''
  \emph{Journal of Software: Evolution and Process}, vol.~37, no.~2, p. e2756,
  2024. [Online]. Available: \url{https://doi.org/10.1002/smr.2756}
\BIBentrySTDinterwordspacing

\bibitem{1634649}
S.-J. Han and S.-B. Cho, ``Evolutionary neural networks for anomaly detection
  based on the behavior of a program,'' \emph{IEEE Transactions on Systems,
  Man, and Cybernetics, Part B (Cybernetics)}, vol.~36, no.~3, pp. 559--570,
  2006.

\bibitem{baier2019challenges}
L.~Baier, F.~J{\"o}hren, and S.~Seebacher, ``Challenges in the deployment and
  operation of machine learning in practice.'' in \emph{European Conference on
  Information Systems - ECIS}, vol.~1, 2019.

\bibitem{janowczyk2016}
A.~Janowczyk and A.~Madabhushi, ``Deep learning for digital pathology image
  analysis: A comprehensive tutorial with selected use cases,'' \emph{Journal
  of Pathology Informatics}, vol.~7, no.~1, p.~29, 2016.

\bibitem{molnar2022interpretable}
C.~Molnar, \emph{Interpretable Machine Learning: A Guide for Making Black Box
  Models Explainable}.\hskip 1em plus 0.5em minus 0.4em\relax Indipendently
  published, 2022.

\bibitem{kim2018}
B.~Kim, M.~Wattenberg, J.~Gilmer, C.~Cai, J.~Wexler, F.~Viegas, and R.~sayres,
  ``Interpretability beyond feature attribution: Quantitative testing with
  concept activation vectors ({TCAV}),'' in \emph{Proceedings of the 35th
  International Conference on Machine Learning}, ser. Proceedings of Machine
  Learning Research, J.~Dy and A.~Krause, Eds., vol.~80.\hskip 1em plus 0.5em
  minus 0.4em\relax PMLR, 10--15 Jul 2018, pp. 2668--2677.

\bibitem{yeh2020}
\BIBentryALTinterwordspacing
C.-K. Yeh, B.~Kim, S.~Arik, C.-L. Li, T.~Pfister, and P.~Ravikumar, ``On
  completeness-aware concept-based explanations in deep neural networks,'' in
  \emph{Advances in Neural Information Processing Systems}, H.~Larochelle,
  M.~Ranzato, R.~Hadsell, M.~Balcan, and H.~Lin, Eds., vol.~33.\hskip 1em plus
  0.5em minus 0.4em\relax Curran Associates, Inc., 2020, pp. 20\,554--20\,565.
  [Online]. Available:
  \url{https://proceedings.neurips.cc/paper_files/paper/2020/file/ecb287ff763c169694f682af52c1f309-Paper.pdf}
\BIBentrySTDinterwordspacing

\bibitem{kusters2020}
F.~Küsters, P.~Schichtel, S.~Ahmed, and A.~Dengel, ``Conceptual explanations
  of neural network prediction for time series,'' in \emph{2020 International
  Joint Conference on Neural Networks (IJCNN)}, 2020, pp. 1--6.

\bibitem{ghorbani2019}
\BIBentryALTinterwordspacing
A.~Ghorbani, J.~Wexler, J.~Y. Zou, and B.~Kim, ``Towards automatic
  concept-based explanations,'' in \emph{Advances in Neural Information
  Processing Systems}, H.~Wallach, H.~Larochelle, A.~Beygelzimer,
  F.~d\textquotesingle Alch\'{e}-Buc, E.~Fox, and R.~Garnett, Eds.,
  vol.~32.\hskip 1em plus 0.5em minus 0.4em\relax Curran Associates, Inc.,
  2019. [Online]. Available:
  \url{https://proceedings.neurips.cc/paper_files/paper/2019/file/77d2afcb31f6493e350fca61764efb9a-Paper.pdf}
\BIBentrySTDinterwordspacing

\bibitem{koh2020}
\BIBentryALTinterwordspacing
P.~W. Koh, T.~Nguyen, Y.~S. Tang, S.~Mussmann, E.~Pierson, B.~Kim, and
  P.~Liang, ``Concept bottleneck models,'' in \emph{Proceedings of the 37th
  International Conference on Machine Learning}, ser. Proceedings of Machine
  Learning Research, H.~D. III and A.~Singh, Eds., vol. 119.\hskip 1em plus
  0.5em minus 0.4em\relax PMLR, 13--18 Jul 2020, pp. 5338--5348. [Online].
  Available: \url{https://proceedings.mlr.press/v119/koh20a.html}
\BIBentrySTDinterwordspacing

\bibitem{chen2020concept}
Z.~Chen, Y.~Bei, and C.~Rudin, ``Concept whitening for interpretable image
  recognition,'' \emph{Nature Machine Intelligence}, vol.~2, no.~12, pp.
  772--782, 2020.

\bibitem{barbiero2022entropy}
P.~Barbiero, G.~Ciravegna, F.~Giannini, P.~Li{\'o}, M.~Gori, and S.~Melacci,
  ``Entropy-based logic explanations of neural networks,'' in \emph{Proceedings
  of the AAAI Conference on Artificial Intelligence}, vol.~36, no.~6, 2022, pp.
  6046--6054.

\bibitem{Gopinath_2023}
D.~Gopinath, L.~Lungeanu, R.~Mangal, C.~P{\u{a}}s{\u{a}}reanu, S.~Xie, and
  H.~Yu, ``Feature-guided analysis of neural networks,'' in \emph{Fundamental
  Approaches to Software Engineering}, 2023, pp. 133--142.

\bibitem{Beland_2020}
S.~Beland, I.~Chang, A.~Chen, M.~Moser, J.~Paunicka, D.~Stuart, J.~Vian,
  C.~Westover, and H.~Yu, ``Towards assurance evaluation of autonomous
  systems,'' in \emph{Proceedings of the 39th International Conference on
  Computer-Aided Design}, ser. ICCAD '20.\hskip 1em plus 0.5em minus
  0.4em\relax New York, NY, USA: Association for Computing Machinery, 2020.

\bibitem{Frew_2004}
E.~Frew, T.~McGee, Z.~Kim, X.~Xiao, S.~Jackson, M.~Morimoto, S.~Rathinam,
  J.~Padial, and R.~Sengupta, ``Vision-based road-following using a small
  autonomous aircraft,'' in \emph{2004 IEEE Aerospace Conference Proceedings
  (IEEE Cat. No.04TH8720)}, vol.~5, 2004, pp. 3006--3015.

\bibitem{caesar2020nuscenes}
H.~Caesar, V.~Bankiti, A.~H. Lang, S.~Vora, V.~E. Liong, Q.~Xu, A.~Krishnan,
  Y.~Pan, G.~Baldan, and O.~Beijbom, ``nuscenes: A multimodal dataset for
  autonomous driving,'' in \emph{Proceedings of the IEEE/CVF conference on
  computer vision and pattern recognition}, 2020, pp. 11\,621--11\,631.

\bibitem{FGA_Replication}
\BIBentryALTinterwordspacing
F.~Formica, S.~Gregis, A.~F. Zanenga, A.~Rota, M.~Lawford, and C.~Menghi,
  ``Feature-guided analysis of neural networks: A replication study,'' 2025.
  [Online]. Available: \url{https://arxiv.org/abs/2511.00052}
\BIBentrySTDinterwordspacing

\bibitem{lecun1998}
Y.~Lecun, L.~Bottou, Y.~Bengio, and P.~Haffner, ``Gradient-based learning
  applied to document recognition,'' \emph{Proceedings of the IEEE}, vol.~86,
  no.~11, pp. 2278--2324, 1998.

\bibitem{Gopinath_2019}
D.~Gopinath, H.~Converse, C.~P{\u{a}}s{\u{a}}reanu, and A.~Taly, ``Property
  inference for deep neural networks,'' in \emph{2019 34th IEEE/ACM
  International Conference on Automated Software Engineering (ASE)}, 2019, pp.
  797--809.

\bibitem{lymphomacaffe2016}
``{Pretrained Caffe Models for Lymphoma Dataset},''
  \url{https://github.com/choosehappy/public/tree/master/DL\%20tutorial\%20Code/7-lymphoma/models},
  2016, last accessed in Sep 2025.

\bibitem{liu2015}
H.~Liu, A.~Gegov, and M.~Cocea, ``Hybrid ensemble learning approach for
  generation of classification rules,'' in \emph{2015 International Conference
  on Machine Learning and Cybernetics (ICMLC)}, vol.~1, 2015, pp. 377--382.

\bibitem{chen2022}
\BIBentryALTinterwordspacing
Z.~Chen, J.~M. Zhang, F.~Sarro, and M.~Harman, ``Maat: a novel ensemble
  approach to addressing fairness and performance bugs for machine learning
  software,'' in \emph{Proceedings of the 30th ACM Joint European Software
  Engineering Conference and Symposium on the Foundations of Software
  Engineering}, ser. ESEC/FSE 2022.\hskip 1em plus 0.5em minus 0.4em\relax New
  York, NY, USA: Association for Computing Machinery, 2022, pp. 1122--1134.
  [Online]. Available: \url{https://doi.org/10.1145/3540250.3549093}
\BIBentrySTDinterwordspacing

\bibitem{moussa2022}
\BIBentryALTinterwordspacing
R.~Moussa, G.~Guizzo, and F.~Sarro, ``Meg: Multi-objective ensemble generation
  for software defect prediction,'' in \emph{Proceedings of the 16th ACM / IEEE
  International Symposium on Empirical Software Engineering and Measurement},
  ser. ESEM '22.\hskip 1em plus 0.5em minus 0.4em\relax New York, NY, USA:
  Association for Computing Machinery, 2022, pp. 159--170. [Online]. Available:
  \url{https://doi.org/10.1145/3544902.3546255}
\BIBentrySTDinterwordspacing

\bibitem{zhong2024}
\BIBentryALTinterwordspacing
W.~Zhong, C.~Li, K.~Liu, T.~Xu, J.~Ge, T.~F. Bissyande, B.~Luo, and V.~Ng,
  ``Practical program repair via preference-based ensemble strategy,'' in
  \emph{Proceedings of the IEEE/ACM 46th International Conference on Software
  Engineering}, ser. ICSE '24.\hskip 1em plus 0.5em minus 0.4em\relax New York,
  NY, USA: Association for Computing Machinery, 2024. [Online]. Available:
  \url{https://doi.org/10.1145/3597503.3623310}
\BIBentrySTDinterwordspacing

\bibitem{yang25}
Y.~Yang, F.~Yang, Y.~Bai, and H.~Wang, ``Self-interpretable reinforcement
  learning via rule ensembles,'' in \emph{Proceedings of the 24th International
  Conference on Autonomous Agents and Multiagent Systems}, ser. AAMAS
  '25.\hskip 1em plus 0.5em minus 0.4em\relax Richland, SC: International
  Foundation for Autonomous Agents and Multiagent Systems, 2025, pp.
  2235--2243.

\bibitem{lu22}
\BIBentryALTinterwordspacing
L.~Yu, M.~Li, X.~Huang, W.~Zhu, Y.~Fang, J.~Zhou, and L.~Li, ``Metarule: A
  meta-path guided ensemble rule set learning for explainable fraud
  detection,'' in \emph{Proceedings of the 31st ACM International Conference on
  Information \& Knowledge Management}, ser. CIKM '22.\hskip 1em plus 0.5em
  minus 0.4em\relax New York, NY, USA: Association for Computing Machinery,
  2022, pp. 4650--4654. [Online]. Available:
  \url{https://doi.org/10.1145/3511808.3557641}
\BIBentrySTDinterwordspacing

\bibitem{zhou2018}
B.~Zhou, Y.~Sun, D.~Bau, and A.~Torralba, ``Interpretable basis decomposition
  for visual explanation,'' in \emph{Computer Vision -- ECCV 2018}, V.~Ferrari,
  M.~Hebert, C.~Sminchisescu, and Y.~Weiss, Eds.\hskip 1em plus 0.5em minus
  0.4em\relax Cham: Springer International Publishing, 2018, pp. 122--138.

\bibitem{Lee25}
J.~H. Lee, G.~Mikriukov, G.~Schwalbe, S.~Wermter, and D.~Wolter,
  ``Concept-based explanations in computer vision: Where are we and where
  could we go?'' in \emph{Computer Vision -- ECCV 2024 Workshops}, A.~Del~Bue,
  C.~Canton, J.~Pont-Tuset, and T.~Tommasi, Eds.\hskip 1em plus 0.5em minus
  0.4em\relax Cham: Springer Nature Switzerland, 2025, pp. 266--287.

\bibitem{Poeta2025}
\BIBentryALTinterwordspacing
E.~Poeta, G.~Ciravegna, E.~Pastor, T.~Cerquitelli, and E.~Baralis,
  ``Concept-based explainable artificial intelligence: A survey,'' \emph{ACM
  Comput. Surv.}, Nov. 2025, just Accepted. [Online]. Available:
  \url{https://doi.org/10.1145/3774643}
\BIBentrySTDinterwordspacing

\bibitem{Lyu2024}
\BIBentryALTinterwordspacing
D.~Lyu, Z.~Zhang, P.~Arcaini, F.~Ishikawa, T.~Laurent, and J.~Zhao,
  ``Search-based repair of dnn controllers of ai-enabled cyber-physical systems
  guided by system-level specifications,'' in \emph{Proceedings of the Genetic
  and Evolutionary Computation Conference}, ser. GECCO '24.\hskip 1em plus
  0.5em minus 0.4em\relax New York, NY, USA: Association for Computing
  Machinery, 2024, pp. 1435--1444. [Online]. Available:
  \url{https://doi.org/10.1145/3638529.3654078}
\BIBentrySTDinterwordspacing

\bibitem{Mancu2024}
\BIBentryALTinterwordspacing
A.~Mancu, T.~Laurent, F.~Rieger, P.~Arcaini, F.~Ishikawa, and D.~Rueckert,
  ``More is not always better: Exploring early repair of dnns,'' in
  \emph{Proceedings of the 5th IEEE/ACM International Workshop on Deep Learning
  for Testing and Testing for Deep Learning}, ser. DeepTest '24.\hskip 1em plus
  0.5em minus 0.4em\relax New York, NY, USA: Association for Computing
  Machinery, 2024, pp. 13--16. [Online]. Available:
  \url{https://doi.org/10.1145/3643786.3648024}
\BIBentrySTDinterwordspacing

\bibitem{calsi2023}
D.~L. Calsi, M.~Duran, X.-Y. Zhang, P.~Arcaini, and F.~Ishikawa, ``Distributed
  repair of deep neural networks,'' in \emph{2023 IEEE Conference on Software
  Testing, Verification and Validation (ICST)}, 2023, pp. 83--94.

\bibitem{Fahmy2022}
\BIBentryALTinterwordspacing
H.~Fahmy, F.~Pastore, and L.~Briand, ``Hudd: a tool to debug dnns for safety
  analysis,'' in \emph{Proceedings of the ACM/IEEE 44th International
  Conference on Software Engineering: Companion Proceedings}, ser. ICSE
  '22.\hskip 1em plus 0.5em minus 0.4em\relax New York, NY, USA: Association
  for Computing Machinery, 2022, pp. 100--104. [Online]. Available:
  \url{https://doi.org/10.1145/3510454.3516858}
\BIBentrySTDinterwordspacing

\bibitem{fahmy2023}
\BIBentryALTinterwordspacing
H.~Fahmy, F.~Pastore, L.~Briand, and T.~Stifter, ``Simulator-based explanation
  and debugging of hazard-triggering events in dnn-based safety-critical
  systems,'' \emph{ACM Trans. Softw. Eng. Methodol.}, vol.~32, no.~4, May 2023.
  [Online]. Available: \url{https://doi.org/10.1145/3569935}
\BIBentrySTDinterwordspacing

\bibitem{ReplicationPackage}
F.~Formica, S.~Gregis, A.~Rota, A.~F. Zanenga, D.~Gopinath, M.~Lawford, and
  C.~Menghi, ``{Replication Package for ``Ensemble-based Feature Guided
  Analysis''},'' \url{https://figshare.com/account/articles/31691773}, 2026,
  last accessed in Mar 2026.

\end{thebibliography}

\end{document}